\theoremstyle{plain}
\newtheorem{Th}{Theorem}[section]
\newtheorem{Lem}[Th]{Lemma}
\newtheorem{Prop}[Th]{Proposition}
\newtheorem{Cor}[Th]{Corollary}
\newtheorem{Prob}[Th]{Problem}
\theoremstyle{definition}
\newtheorem{Def}{Definition}[section]
\newtheorem{Ex}{Example}[section]
\newcommand{\cA}{{\mathfrak A}}
\newcommand{\cS}{{\mathfrak S}}
\newcommand{\cF}{{\mathfrak F}}
\newcommand{\norm}[1]{\left\lVert#1\right\rVert}
\newtheorem{rem}{Remark}[section]
\newtheorem{propt}{Property}[section]
\newcommand{\x}{{ x}}
\newcommand{\cL}{{\cal L}}
\newcommand{\bgeqn}{\begin{eqnarray}}
\newcommand{\edeqn}{\end{eqnarray}}
\newcommand{\bgeq}{\begin{eqnarray*}}
\newcommand{\edeq}{\end{eqnarray*}}
\newcommand{\bec}{\begin{center}}
\newcommand{\enc}{\end{center}}
\newcommand{\D}{{\cal D}}
\newcommand{\half}{ \mbox{\small$\frac{1}{2}$}}
\newcommand{\be}{\begin{equation}}
\newcommand{\ee}{\end{equation}}
\def\ess {{\rm ess\, sup}}
\def\esi {{\rm ess\, inf}}
\def\hq{\bar{q}}
\def\bbr{{\mathbb{R}}} 
\def\bbe{{\mathbb{E}}} 
\def\bbn{{\mathbb{N}}}
\def\bbp{{\mathbb{P}}}
\def\bbq{{\mathbb{Q}}}
\newsavebox{\@brx}
\newcommand{\llangle}[1][]{\savebox{\@brx}{\(\m@th{#1\langle}\)}%
  \mathopen{\copy\@brx\kern-0.5\wd\@brx\usebox{\@brx}}}
\newcommand{\rrangle}[1][]{\savebox{\@brx}{\(\m@th{#1\rangle}\)}%
  \mathclose{\copy\@brx\kern-0.5\wd\@brx\usebox{\@brx}}}
\newtheorem{remark}{Remark}
\def\plus{{\scriptscriptstyle +}} \def\minus{{\scriptscriptstyle -}}
\def\text#1{\;\,\hbox{#1}\;\,}    
\def\lset{\big\{\,}    \def\mset{\,\big|\,}   \def\rset{\,\big\}}
\def\Lset{\Big\{\,}    \def\Mset{\,\Big|\,}   \def\Rset{\,\Big\}}
\outer\def\proclaim #1. #2
\def\paritem#1{\vskip0cm\noindent\hskip12pt{{\rm #1}}\hskip5pt}
\def\eop{\hfill{$\vcenter{\hrule height1pt \hbox{\vrule width1pt height5pt
   \kern5pt \vrule width1pt} \hrule height1pt}$} \medskip}
\def\low#1{{\lower1pt \hbox{$\scriptstyle #1$}}}
\def\high#1{{\raise1pt \hbox{$\scriptstyle #1$}}}
\def\implies{\quad\hbox{$\Longrightarrow$}\quad} 
\def\iff{\quad\hbox{$\Longleftrightarrow$}\quad}
\def\argmin{\mathop{\rm argmin}}   \def\argmax{\mathop{\rm argmax}}
\def\half{{{}\raise 1pt \hbox{$\frac{\scriptstyle 1}{\scriptstyle 2}$}}}
\def\eqalign#1{\begin{array}{lcr} #1 \end{array}}
\def\reals{{I\kern-.35em R}} \def\mdot{{\kern-.02em\cdot\kern-.04em}}
 \def\newpage{\vfill\eject}
\def\cA{{\cal A}}   \def\cD{{\cal D}} 
\def\cE{{\cal E}} \def\cF{{\cal F}}  \def\cI{{\cal I}} 
  \def\cL{{\cal L}}  
 \def\cQ{{\cal Q}} \def\cR{{\cal R}} \def\cS{{\cal S}} 
\def\cV{{\cal V}}
\def\<x>{\langle\!\langle\mathbf{x}\rangle\!\rangle}
\def\l<{\langle\!\langle}
\def\r>{\rangle\!\rangle}
\pgfplotsset{compat=1.17}
\title{Support Vector Regression:\\
Risk Quadrangle Framework}
\author{Anton Malandii\thanks{Department of Applied Mathematics and Statistics, State University of New York, Stony Brook, NY 11794, USA. Email: \url{anton.malandii@stonybrook.edu}, \url{stanislav.uryasev@stonybrook.edu}}
\and
Stan Uryasev\footnotemark[1]
}
\begin{document}
\maketitle

\begin{abstract}

\noindent
This paper investigates \emph{Support Vector Regression} (SVR) within the framework of the \emph{Risk Quadrangle} (RQ) theory. Every RQ includes four stochastic functionals -- \emph{error, regret, risk}, and \emph{deviation}, bound together by a so-called statistic. The RQ framework unifies stochastic optimization, risk management, and statistical estimation. Within this framework, both $\varepsilon$-SVR and $\nu$-SVR are shown to reduce to the minimization of the \emph{Vapnik error} and the \emph{Conditional Value-at-Risk} (CVaR) norm, respectively. The Vapnik error and CVaR norm define quadrangles with a statistic equal to the average of two symmetric quantiles. Therefore, RQ theory implies that $\varepsilon$-SVR and $\nu$-SVR are asymptotically unbiased estimators of the average of two symmetric conditional quantiles. Moreover, the equivalence between $\varepsilon$-SVR and $\nu$-SVR is demonstrated in a general stochastic setting.
Additionally, SVR is formulated as a deviation minimization problem. Another implication of the RQ theory is the formulation of $\nu$-SVR as a Distributionally Robust Regression (DRR) problem. Finally, an alternative dual formulation of SVR within the RQ framework is derived. Theoretical results are validated with a case study.
\vspace{0.2cm}

\noindent\textbf{Keywords:} support vector regression, risk quadrangle, stochastic optimization, distributionally robust optimization, estimation, conditional value-at-risk, CVaR, value-at-risk, quantile, VaR, CVaR norm.

\end{abstract}

\section{Introduction}
\emph{Regression} approximates a
random variable $Y$ by a function $\hat{f}$ of an observed random vector $ \mathbold{X} = (X_1,\ldots, X_n)^\top$. The function $\hat{f}(\mathbold{X})$ from a given class  $\cF$ is found by minimizing an error function applied to a regression residual $Z_f = Y-f(\mathbold{X})$. Usually, a norm serves as an error (e.g., $\cL^1$-regression, $\cL^2$-regression), however, more generally, (cf. \citep{RiskTuning}), axiomatically defined error measures can be used.
Usually, class  $\cF$ consists of polynomials, splines, wavelets, or neural networks (see e.g., \citep{MLBook}). 

From the statistical perspective, the purpose of regression is to \emph{estimate} a conditional \emph{statistic} $\cS(Y|\mathbold{X})$ of a random variable $Y$ given $\mathbold{X}$ by finding a function $\hat{f}$, which is called the \emph{best estimator} (or Bayes predictor according to \cite{Bach}). For example, $\cL^1$-regression estimates the conditional median, i.e.,  $\hat{f}_{\cL^1}(\mathbold{X}) \in \textrm{med}[Y|\mathbold{X}],$ (where we use ``$\in$'' to emphasize that the optimal solution may not be unique) and $\cL^2$-regression estimates the conditional mean $\hat{f}_{\cL^2}(\mathbold{X}) = \bbe[Y|\mathbold{X}].$

There is an extensive literature related to regression. For instance, Google search ``linear regression'' results in 323 million hits (on September 27, 2023). Here we refer only two general frameworks directly relevant to Support Vector Regression (SVR): 1) VC theory, popular in the machine learning community, \citep{VapnikBook} and 2) RQ theory, well-known in risk management, \citep{Quadrangle}. The first framework considers error measures only of the expectation type, i.e., $\bbe [\ell(Z_f)]$, where $\ell$ is a so-called loss function; hence the choice of error boils down to the choice of the loss function. The second framework considers axiomatically defined error measures $\cE(Z_f)$ that are not necessarily of expectation type. Moreover, RQ links a selected error with other uncertainty measures: risk, deviation, and regret. A brief introduction to the RQ theory and its relationship formulae for quadrangle construction can be found in  Appendix \ref{appendix1}.
Errors of expectation type play a crucial role in estimation theory. Indeed, the \cite[Regression Theorem]{Quadrangle} states that in this case, the best estimator belongs to a conditional statistic from the corresponding quadrangle, i.e.,
$\hat{f}_\cE(\mathbold{X}) \in \cS(Y|\mathbold{X}).
$
Hence the choice of a loss function results in a particular statistic. 

In the context of machine learning, regression is understood as a procedure for an optimal fitting of a given dataset $\left(\mathbold{x}_i, y_i\right)_{i=1}^l, \ \mathbold{x}_i \in \bbr^n, \ y_i \in \bbr$, which is called a \emph{training sample}. The goal here is constructing a prediction model $\hat{f}(\mathbold{x})$, which gives a forecast $\hat{f}(\mathbold{x}_{l+1})$ for the future outcome $y_{l+1}$. Assuming that an
error function $\cE(Z_f)$ is minimized,  \citep[Regression Thorem]{Quadrangle} implies that
$$\hat{f}_\cE(\mathbold{x}_{l+1}) \in \cS(Y|\mathbold{X} = \mathbold{x}_{l+1}),
$$
i.e., the optimal prediction model is a conditional statistic.

This paper studies SVR using the RQ framework. SVR is a well-established machine learning method that has been extensively studied in the framework of VC theory. However, important questions are not addressed with VC theory. In particular, it is not clear which statistical quantity SVR estimates.

\paragraph{Contribution.}
By formulating SVR within the RQ framework, we establish a connection between this machine learning approach and classical statistics, risk management, and distributionally robust optimization (DRO). We derive a quadrangle corresponding to $\varepsilon$-SVR (see Proposition~\ref{Prop expect quadr}), where the risk and deviation can be used for risk management.

We demonstrate that SVR is an asymptotically unbiased estimator of the average of two symmetric conditional quantiles (see Figure \ref{fig:SVR graph} for a graphical illustration)
$$\cS_\alpha(Y|\mathbold{X}) = \dfrac{1}{2}\big(q_{(1-\alpha)/2}\big(Y|\mathbold{X}\big)+ q_{(1+\alpha)/2}\big(Y|\mathbold{X}\big)\big) \, ,
$$
see Subsection \ref{sec: estimation}. 
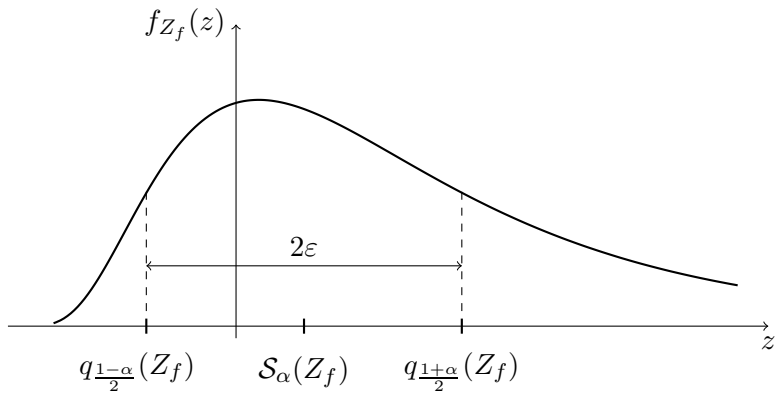
\begin{figure}[ht!]
    \centering
   \begin{tikzpicture}[scale=2]
    \draw[->] (-1,0) -- (4,0) node[anchor=north] {$z$};
    \draw[->] (0.5,-0.08) -- (0.5,2) node[anchor=east] {$f_{Z_f}(z)$};

    \draw[thick, smooth, domain=-0.7:3.8, samples=100] 
        plot(\x, {1.5 * exp(-1.5 * ((ln(\x+1)-0.5)^2))});

    \draw[thick] (0.947,-0.05) -- (0.947,0.05); \node at (0.947,-0.3) {$\cS_\alpha(Z_f)$};

    \draw[dashed] (-0.09,0.1) -- (-0.09,0.9);  \draw[thick] (-0.09, -0.05) -- (-0.09, 0.05);
    \node at (-0.15, -0.3) {$q_{\frac{1-\alpha}{2}}(Z_f)$};

    \draw[dashed] (1.9847,0.1) -- (1.9847,0.9);\draw[thick] (1.9847, -0.05) -- (1.9847, 0.05);
    \node at (1.9847, -0.3) {$q_{\frac{1+\alpha}{2}}(Z_f)$};

    \draw[<->] (-0.09, 0.4) -- (1.9847, 0.4) node[midway, anchor=south] {$2\varepsilon$};
\end{tikzpicture}
    \caption{Graphical illustration of the SVR estimator. For $\varepsilon$-SVR, let $\mathbb{E}[|Z_f| - \varepsilon]_+$ be the Vapnik error with parameter $\varepsilon \geq 0$, where $Z_f$ denotes the regression residual with pdf $f_{Z_f}(z)$. The parameter $\varepsilon$ defines the distance between two symmetric quantiles, $q_{\frac{1+\alpha}{2}}(Z_f)$ and $q_{\frac{1-\alpha}{2}}(Z_f)$, such that $\varepsilon = \frac{1}{2}\Big(q_{\frac{1+\alpha}{2}}(Z_f) - q_{\frac{1-\alpha}{2}}(Z_f)\Big)$. The optimal solution to the regression problem with the Vapnik error is then the average of these two conditional quantiles, where the parameter $\alpha$ is implicitly defined by $\varepsilon$. For $\nu$-SVR with $\nu = 1 - \alpha$, one specifies the parameter $\alpha$ directly and minimizes the CVaR norm.}
    \label{fig:SVR graph}
\end{figure}

This implies that by adjusting the parameter $\alpha = 1-\nu$ (or $\varepsilon$), $\nu$-SVR (or $\varepsilon$-SVR) can estimate various distributional statistics, such as the mean, median, and expectiles. The desired statistic can be estimated by selecting an appropriate performance metric (error function) in cross-validation and tuning the parameter $\nu$ (or $\varepsilon$) accordingly. For example, choosing Mean Squared Error (MSE) in cross-validation results in a good estimate of the mean, while Asymmetric Mean Squared Error (AMSE) is appropriate for expectiles, \citep{Newey} (see Lemma \ref{lemma}). 

Another result of RQ theory is the formulation of SVR as a deviation minimization problem (see Corollary \ref{error shap coroll}). This has both conceptual and practical implications. In the case of linear regression, it reduces the dimensionality of the problem, as the intercept can be calculated analytically.  Also, this approach addresses issues with calculating an optimal regression intercept, \citep{SVRTutorial}. Conceptually, RQ theory reveals the deviation and risk measures associated with SVR. 

We prove the equivalence of $\nu$-SVR and $\varepsilon$-SVR in a general stochastic setting, providing analytical expressions for $\varepsilon$ and $\nu$ that establish this equivalence (see Proposition \ref{dual svr connection}).

\cite{stable_regres} introduced the concept of stable regression as a robust approach to training regression models. By replacing conventional random data assignment with an optimization-based method, they achieved significantly improved prediction accuracy, greater model stability, and more effective feature selection. In turn, by leveraging the duality theory of convex functionals within the RQ framework, we reformulate $\nu$-SVR as a distributionally robust regression (DRR) problem (see Subsection~\ref{sec:svr as drr}) and prove its equivalence to stable regression.

Finally, we derive a new dual formulation of SVR within the RQ framework (see Subsection \ref{sec:dual formul}). This new formulation offers computational advantages, halving the number of variables compared to the standard dual formulation. It is mathematically transparent and can be solved using general-purpose optimization packages such as CPLEX, Gurobi, CVX, PSG, etc., allowing for efficient implementation and flexibility in choosing optimization tools. 

Overall, this paper provides a comprehensive analysis of SVR, extending and generalizing key results, including the equivalence of SVR formulations, its interpretation as DRR, and its general dual formulation. Additionally, it rigorously demonstrates what SVR estimates, highlighting both its capabilities and limitations.

\paragraph{Outline.} Section \ref{Sec SVR Formulations} reviews  formulations of $\varepsilon$-SVR and $\nu$-SVR, presenting their equivalent stochastic reformulations. Section \ref{sec: Risk Quadrangle Framework} introduces key definitions and theorems related to the RQ framework and its relation to SVR. In particular, it constructs a new quadrangle corresponding to the Vapnik error. Section \ref{sec: svr as generalized regression} studies SVR as a generalized regression problem in the RQ framework. Specifically, it proves the equivalence between $\varepsilon$-SVR and $\nu$-SVR, formulates SVR as a deviation minimization problem, discusses its estimation properties, interprets SVR as a form of DRR, and presents a novel dual formulation of SVR. Section \ref{sec: case studies} presents a case study based on simulated data, providing numerical verification of the paper's theoretical findings. Finally, Section~\ref{sec:conclusion} summarizes the key contributions and results of the paper.

\section{SVR Formulations}
\label{Sec SVR Formulations}
This section formulates two popular SVRs: $\varepsilon$-SVR and $\nu$-SVR. Equivalence of $\varepsilon$-SVR and $\nu$-SVR was established in \cite{NuSVR} in the sense that for $\nu$-SVR  with parameter $\nu \in (0,1]$ there exists an $\varepsilon \geq 0$ such that $\varepsilon$-SVR has the same optimal solution.
\subsection{The $\texorpdfstring{\mathbold{\varepsilon}}{e}$-SVR}
Consider a linear regression with a training sample 
\begin{equation}\label{empirical data}
    X^l = \left(\mathbold{x}_i,y_i\right)_{i=1}^l,
\end{equation}
where $\mathbold{x}_i \in \bbr^n$ is a \emph{feature} vector and  $y_i \in \bbr$ is a \emph{target output}. 
One needs to find a hyperplane $ \mathbold{w}^\top\mathbold{x} + b, \; (\mathbold{w},b) \in \bbr^{n+1}$ 
that optimally fits the given training data. This problem can be efficiently solved with $\varepsilon$-SVR introduced by \cite{VapnikBook}. The $\varepsilon$-SVR is formulated as follows
\begin{equation}\label{e-SVR with e-loss vector}
   \min_{\mathbold{w},b} \quad \frac{1}{l}\sum_{i=1}^{l} [|y_i - \mathbold{w}^\top\mathbold{x}_i - b | - \varepsilon]_+ + \frac{\lambda}{2}\norm{\mathbold{w}}^2_2 \, , 
\end{equation}
where $\lambda>0$, $\varepsilon > 0$, $\norm{\cdot}_2$ denotes the $\ell^2$-norm\ , and
$[a]_+ = \max\{0,a\}$  is a positive part of a number $a \in \bbr$.
 The $\varepsilon$-SVR searches for a hyperplane having at most $\varepsilon$ deviation from targets $y_i$. The constant $\lambda>0$ determines a trade-off between the flatness (magnitude $\|\mathbold{w}\|_2^2$ of the weight vector $\mathbold{w}$) of the hyperplane and the amount up to which deviations larger than $\varepsilon$ are tolerated.

In the probabilistic framework, we consider that $\mathbold{z} = \mathbold{z}(\mathbold{w},b) $
is a random variable taking with equal probabilities components of the vector
$ (y_1 -\mathbold{w}^\top\mathbold{x}_1 - b, \ldots, y_l - \mathbold{w}^\top\mathbold{x}_l - b)^\top$
and define the expected loss as
$$
\bbe[|\mathbold{z}(\mathbold{w},b)|-\varepsilon]_+=\frac{1}{l}\sum_{i=1}^{l} [|y_i - \mathbold{w}^\top\mathbold{x}_i - b | - \varepsilon]_+ \, .
$$
Further, we reformulate $\varepsilon$-SVR (\ref{e-SVR with e-loss vector}) as follows
\begin{equation}\label{e-SVR with e-loss vector probabilistic}
    \min_{\mathbold{w},b} \quad   \bbe[|\mathbold{z}(\mathbold{w},b)|-\varepsilon]_+ + \frac{\lambda}{2}\norm{\mathbold{w}}^2_2\,.
\end{equation}

\subsection{The $\texorpdfstring{\mathbold{\nu}}{nu}$-SVR}
$\nu$-SVR introduced by \cite{NewSVM}, can be formulated as follows
\begin{equation}\label{nu-SVR with e-loss vector probabilistic}
    \min_{\mathbold{w},b, \varepsilon} \quad   \bbe[|\mathbold{z}(\mathbold{w},b)|-\varepsilon]_+ +\varepsilon \nu + \frac{\lambda}{2}\norm{\mathbold{w}}^2_2 \, ,
\end{equation}
 where parameter $\nu \in (0,1]$ controls the number of support vectors.  Similar to \cite{NuSVMasCVaR} we reformulate the $\nu$-SVR.  

The minimum w.r.t $\epsilon$ of the first two terms in the previous formula  equals
$$
    \min_{\varepsilon} \quad   \bbe[|\mathbold{z}(\mathbold{w},b)|-\varepsilon]_+ +\varepsilon \nu\; = \;\nu \hq_{1-\nu}(|\mathbold{z}(\mathbold{w},b)|) 
    	\;\equiv \;\llangle \mathbold{z}(\mathbold{w},b)) \rrangle_{1-\nu}\,\,,
$$
where $\hq_{1-\nu}(\cdot)$ is the conditional value-at-risk (CVaR), 
see Definition \ref{squantile}, and $\llangle \cdot \rrangle_{1-\nu}$ is  CVaR norm
studied  in \cite{CVaRNorm,bertsimas2011robust}, see   Definition~\ref{non-scaled CVaR Norm}.
 Therefore, $\nu$-SVR \eqref{nu-SVR with e-loss vector probabilistic} is reformulated as follows 
\begin{equation}\label{nu-SVR with cvar norm vector probabilistic}
    \min_{\mathbold{w},b} \quad   \llangle \mathbold{z}(\mathbold{w},b) \rrangle_{1-\nu} + \frac{\lambda}{2}\norm{\mathbold{w}}^2_2\,.
\end{equation}

\section{Risk Quadrangle Framework}\label{sec: Risk Quadrangle Framework}
This section formally introduces quantile (also called value-at-risk (VaR) in finance), CVaR, CVaR norm, Vapnik error, and related quadrangles.

\subsection{CVaR and Optimization Formulas}
We consider risk measures (stochastic functionals ranking random values) satisfying the following properties: \emph{constant neutrality, convexity, aversity, closedness, monotonicity, and homogeneity}. A risk measure that possesses the first four properties is referred to as a \emph{regular} risk measure.

Let $(\Omega, \mathcal{A}, \mathbb{P})$ be a probability space, $X \in \cL^2(\Omega)$ be a real-valued random variable, and the cumulative distribution function be denoted by $F_X(x) = \bbp(X\leq x), \ x \in \bbr$.
\begin{Def}[Regular Risk Measure, \citep{Quadrangle}]\label{regular risk measure}
A functional $\cR: \cL^2(\Omega) \to \bbr \cup \{+ \infty\} $ is called a \textit{regular measure of risk} if it satisfies the following axioms
\begin{itemize}
    \item[(R1)] \textbf{constant neutrality:} $\cR(C) = C, \quad \forall \; C = const\,;$
     \item[(R2)] \textbf{convexity:} $\cR\left(\lambda X + (1-\lambda)Y\right) \leq \lambda \cR(X) + (1-\lambda)\cR(Y), \quad \forall \; X,Y$ and $\lambda \in [0,1]\,$;
    \item[(R3)] \textbf{closedness:} $\left\{ X \in \cL^2(\Omega) \mset \cR(X) \leq c\right\}$ is closed $\forall \; c < \infty\,$;
    \item[(R4)] \textbf{aversity:} $\cR(X) > \bbe X, \quad \forall \; X \neq const\,.$
\end{itemize}
\end{Def}

\begin{Def}[Quantile]\label{quantile}
The \textit{quantile} (value-at-risk or VaR) of a random variable X at confidence level $\alpha \in [0,1]$ is a set defined as follows
\begin{equation}
    q_\alpha(X) = [q_\alpha^- (X), q_\alpha^+ (X)] \, ,
\end{equation}
where
\begin{equation}\label{quantile-}
    q_\alpha^- (X) = \begin{cases}
 \sup \left\{ x\mset F_X(x) < \alpha\right\}, &\alpha \in (0,1] \\
 \esi(X), &\alpha = 0
 
\end{cases}
\end{equation}
\begin{equation}\label{quantile+}
    q_\alpha^+ (X) = \begin{cases}
 \inf \left\{ x\mset F_X(x) > \alpha\right\},&\alpha \in [0,1) \\
 \ess(X),&\alpha = 1
 
\end{cases}
\end{equation}
If $q_\alpha^-(X) = q_\alpha^+(X)$ then 
\begin{equation*}
    q_\alpha(X) = q_\alpha^-(X) = q_\alpha^+(X) \, .
\end{equation*}
\end{Def}
\begin{rem}[Sum and scaling of quantiles]
Quantile, generally speaking, is an interval, therefore, the sum of two quantiles is defined as a Minkowski sum of convex sets, i.e., for $\alpha_1,\alpha_2 \in [0,1]$
\begin{equation}\label{sum of quants}
    q_{\alpha_1}(X) + q_{\alpha_2}(X) = \lset v+w \mset v \in  q_{\alpha_1}(X), w \in q_{\alpha_2}(X) \rset\,.
\end{equation}
The scaling of a quantile by an arbitrary constant $\lambda \in \bbr$ is defined as follows
\begin{equation}\label{scaling of quants}
    \lambda q_{\alpha}(X) = \lset \lambda w \mset w \in q_{\alpha}(X) \rset\,.
\end{equation}
\end{rem}

CVaR (also called, tail value-at-risk, average value-at-risk, expected shortfall) is a popular regular risk measure. It has favorable mathematical properties and can be efficiently optimized, \citep{CVaR, CVaR2}.
\begin{Def}[CVaR]\label{squantile}
The \emph{Conditional Value-at-Risk (CVaR)} of random variable X at confidence level $\alpha \in [0,1]$ is defined as 
\begin{equation}
    \hq_{\alpha}(X) = \dfrac{1}{1-\alpha}\displaystyle \int\limits_{\alpha}^{1}q_{\beta}(X) \, d\beta, \quad \alpha \in (0,1)\,.
\end{equation}
For $\alpha = 0:$ 
$
   \;\; \hq_0(X) = \lim_{\varepsilon \to 0} \hq_{\varepsilon}(X) = \bbe[X]\,. 
$

\noindent
For $\alpha = 1:$
$
  \;\;  \hq_1(X) = \ess(X)\,. 
$
\end{Def}
The following Theorem \ref{squantile opt th} by \cite{CVaR2} is used to build optimization algorithms for CVaR.
\begin{Th}[CVaR Optimization Formula] \label{squantile opt th}
For a random variable $X$ and $\alpha \in (0,1)$, 
\begin{equation}\label{squantile opt formula}
    \hq_\alpha(X) = \min_C \left\{ C + \frac{1}{1-\alpha}\bbe [X-C]_+\right\}\,,
\end{equation}
and the set of minimizers for (\ref{squantile opt formula}) is $q_\alpha(X)$.
\end{Th}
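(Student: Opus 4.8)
The plan is to follow the classical one-dimensional convexity argument of \cite{CVaR2}. Set $\varphi(C) := C + \frac{1}{1-\alpha}\bbe[X-C]_+$ for $C\in\bbr$. First I would observe that $C\mapsto [X(\omega)-C]_+$ is convex and $1$-Lipschitz for each $\omega$, so since $X\in\cL^2(\Omega)\subset\cL^1(\Omega)$, the function $\varphi$ is finite, convex and Lipschitz on $\bbr$ by dominated convergence. It is also coercive: as $C\to+\infty$ the term $\bbe[X-C]_+\to 0$ so $\varphi(C)\to+\infty$, while from $[X-C]_+\ge X-C$ we get $\varphi(C)\ge \frac{\bbe[X]}{1-\alpha}-\frac{\alpha}{1-\alpha}C\to+\infty$ as $C\to-\infty$ (here $\alpha>0$, equivalently $\frac{1}{1-\alpha}>1$, is essential). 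Hence the infimum in \eqref{squantile opt formula} is attained.

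Next I would identify the set of minimizers through the one-sided derivatives of $\varphi$. A computation with monotone/dominated convergence gives $\frac{d}{dC}\bbe[X-C]_+$ equal to $-\bbp(X\ge C)$ from the left and $-\bbp(X>C)$ from the right, hence
\[
\varphi'_-(C) = 1-\frac{1-\bbp(X<C)}{1-\alpha},\qquad \varphi'_+(C) = 1-\frac{1-F_X(C)}{1-\alpha}.
\]
Since $\varphi$ is convex, $C^*$ minimizes $\varphi$ iff $\varphi'_-(C^*)\le 0\le\varphi'_+(C^*)$, which rearranges to $\bbp(X<C^*)\le\alpha\le F_X(C^*)$. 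By Definition~\ref{quantile} the set of such $C^*$ is exactly the interval $q_\alpha(X)=[q_\alpha^-(X),q_\alpha^+(X)]$, which proves the claim about the argmin.

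Finally I would evaluate $\varphi$ at an arbitrary $C^*\in q_\alpha(X)$ and show it equals $\hq_\alpha(X)$. Using the quantile transform, $X$ and $q_U(X)$ share the same law for $U$ uniform on $(0,1)$, so $\bbe[X-C^*]_+=\int_0^1[q_\beta(X)-C^*]_+\,d\beta$ (any measurable selection $q_\beta\in q_\beta(X)$ works, the ambiguous $\beta$'s being at most countable). Because $\beta\mapsto q_\beta(X)$ is nondecreasing and $C^*\in q_\alpha(X)$, one has $q_\beta(X)\le C^*$ for $\beta<\alpha$ and $q_\beta(X)\ge C^*$ for $\beta>\alpha$, so the integrand vanishes on $(0,\alpha)$ and $\bbe[X-C^*]_+=\int_\alpha^1 q_\beta(X)\,d\beta-(1-\alpha)C^*$. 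Substituting back,
\[
\varphi(C^*) = C^* + \frac{1}{1-\alpha}\Big(\int_\alpha^1 q_\beta(X)\,d\beta-(1-\alpha)C^*\Big) = \frac{1}{1-\alpha}\int_\alpha^1 q_\beta(X)\,d\beta = \hq_\alpha(X),
\]
matching Definition~\ref{squantile}, and completing the proof.

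The step I expect to require the most care is the interchange of differentiation and expectation yielding the one-sided derivatives, specifically the case where $X$ has an atom at the level $C$: there $\varphi'_-$ and $\varphi'_+$ genuinely differ and $q_\alpha(X)$ is a nondegenerate interval, so the subdifferential bookkeeping must be done precisely. A clean alternative for the last paragraph is to first prove the formula when $F_X$ is continuous and strictly increasing (so $q_\alpha(X)$ is a single point and the quantile-transform computation is immediate), and then pass to the general case by perturbing $X$ to $X+\delta W$ with $W$ standard normal and letting $\delta\downarrow 0$, using continuity of both sides of \eqref{squantile opt formula} under $\cL^2$-perturbation.
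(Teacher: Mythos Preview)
Your argument is correct and is essentially the classical Rockafellar--Uryasev proof: establish convexity and coercivity of $\varphi(C)=C+\frac{1}{1-\alpha}\bbe[X-C]_+$, compute its one-sided derivatives to identify the minimizing set as $\{C:\bbp(X<C)\le\alpha\le F_X(C)\}=q_\alpha(X)$, and then evaluate $\varphi$ at any $C^*\in q_\alpha(X)$ via the quantile transform to recover $\hq_\alpha(X)$. Note that the paper does not supply its own proof of this theorem; it simply quotes the result from \cite{CVaR2}, so there is nothing further to compare against---your write-up reproduces the argument from that source in all essential respects, and the care you flag about atoms (where $\varphi'_-\neq\varphi'_+$) is exactly the point where the argmin becomes a genuine interval.
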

There is a deep relation between CVaR and the mean excess function $\bbe[X-x]_+, \  x \in \bbr$ (the mean excess function is also called regret or partial moment of order $1$). 
\cite{rockafellar2014random} proved the following theorem (see, also, \citep{DualCVaR}).
\begin{Th}[Dual CVaR Optimization Formula]\label{dual squantile th}
For a random variable $X$ and $x \in \bbr$, 
\begin{equation}\label{dual squantile formula}
    \mathbb{E}[X-x]_+ = \max_{\alpha \in [0,1]}\;(1-\alpha)(\hq_{\alpha}(X)-x)\,,
\end{equation}
and the set of maximizers for (\ref{dual squantile formula}) is $[\bbp(X<x), \bbp(X\leq x)]\,.$
\end{Th}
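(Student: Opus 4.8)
The plan is to obtain this ``dual'' identity directly from the CVaR optimization formula, Theorem~\ref{squantile opt th}, exploiting that the given number $x$ is itself an admissible value of the scalar variable $C$ in that formula. \emph{Upper bound on each term.} For $\alpha\in(0,1)$, substituting $C=x$ into the minimization in Theorem~\ref{squantile opt th} yields $\hq_\alpha(X)\le x+\tfrac{1}{1-\alpha}\bbe[X-x]_+$, and multiplying by $1-\alpha>0$ rearranges this to $(1-\alpha)(\hq_\alpha(X)-x)\le\bbe[X-x]_+$. The boundary values are immediate: at $\alpha=0$ this reads $\bbe[X]-x\le\bbe[X-x]_+$, and at $\alpha=1$ the left-hand side equals $0$. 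Hence $\sup_{\alpha\in[0,1]}(1-\alpha)(\hq_\alpha(X)-x)\le\bbe[X-x]_+$.

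\emph{Attainment and the maximizer set.} For $\alpha\in(0,1)$, Theorem~\ref{squantile opt th} also tells us that equality holds in the bound above exactly when $C=x$ lies in the set of minimizers, i.e.\ when $x\in q_\alpha(X)$, and that the inequality is strict otherwise. It therefore remains to describe $\{\alpha:x\in q_\alpha(X)\}$. Using Definition~\ref{quantile} together with the monotonicity and right-continuity of $F_X$, I would verify the one-sided equivalences $q_\alpha^-(X)\le x\iff\alpha\le F_X(x)=\bbp(X\le x)$ and $q_\alpha^+(X)\ge x\iff\alpha\ge F_X(x^-)=\bbp(X<x)$; combined, these give $x\in q_\alpha(X)=[q_\alpha^-(X),q_\alpha^+(X)]\iff\alpha\in[\bbp(X<x),\bbp(X\le x)]$, a nonempty subinterval of $[0,1]$. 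If this interval contains a point $\alpha\in(0,1)$, then $(1-\alpha)(\hq_\alpha(X)-x)=\bbe[X-x]_+$ there, so the supremum above is attained and equals $\bbe[X-x]_+$, with argmax exactly $[\bbp(X<x),\bbp(X\le x)]$. The only remaining possibility, that this interval is $\{0\}$ or $\{1\}$, is settled directly from $(1-0)(\hq_0(X)-x)=\bbe[X]-x$ and $(1-1)(\hq_1(X)-x)=0$, which coincide with $\bbe[X-x]_+$ precisely when $X\ge x$ a.s.\ (respectively when $x\ge\ess(X)$).

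\emph{Main obstacle and an alternative.} The genuinely delicate part is the endpoint bookkeeping above --- confirming that the argmax is the full \emph{closed} interval $[\bbp(X<x),\bbp(X\le x)]$ and that its degenerate instances are consistent --- along with the routine but atom-sensitive verification of the quantile--CDF equivalences: when $\bbp(X=x)>0$ one has $q_\alpha(X)=\{x\}$ on the entire subinterval $(\bbp(X<x),\bbp(X\le x))$, so it contributes nothing. If one prefers to bypass Theorem~\ref{squantile opt th}, the identity can instead be proved by hand: since $(1-\alpha)\hq_\alpha(X)=\int_\alpha^1 q_\beta(X)\,d\beta$, set $g(\alpha)=(1-\alpha)(\hq_\alpha(X)-x)=\int_\alpha^1(q_\beta(X)-x)\,d\beta$; monotonicity of $\beta\mapsto q_\beta(X)$ makes $g$ concave with derivative $x-q_\alpha(X)$, hence $g$ is maximized exactly where $0\in[x-q_\alpha^+(X),\,x-q_\alpha^-(X)]$, and at such $\alpha$ one obtains $g(\alpha)=\int_0^1[q_\beta(X)-x]_+\,d\beta=\bbe[X-x]_+$ by the quantile representation of the expectation.
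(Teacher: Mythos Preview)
Your proposal is correct. The paper does not actually include a proof in the compiled text---the theorem is attributed to \cite{rockafellar2014random}---but the \LaTeX{} source contains a commented-out proof, and it is against that draft argument one can compare. That argument sets $f(\alpha)=-(1-\alpha)\hq_\alpha(X)$, computes the Fenchel conjugate $f^*(x)=\max_\alpha\{x\alpha+(1-\alpha)\hq_\alpha(X)\}$, locates the maximizer via the one-sided derivative conditions $q_\alpha^-(X)\le x\le q_\alpha^+(X)\iff\alpha\in[\bbp(X<x),\bbp(X\le x)]$, and then evaluates $f^*(x)$ by substituting the CVaR optimization formula $\hq_\alpha(X)=q_\alpha(X)+\tfrac{1}{1-\alpha}\bbe[X-q_\alpha(X)]_+$ at that optimal $\alpha$.

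Your first route is the same argument stripped of the conjugate-function wrapper: instead of differentiating, you plug $C=x$ directly into Theorem~\ref{squantile opt th} to get the inequality, and then invoke the argmin statement of that theorem to decide when it is tight. Both proofs therefore rest entirely on Theorem~\ref{squantile opt th}; yours is simply more elementary in that it avoids any appeal to the differentiability of $\alpha\mapsto(1-\alpha)\hq_\alpha(X)$ and handles the endpoints $\alpha\in\{0,1\}$ explicitly rather than via limits. Your second (integral) route, using $g(\alpha)=\int_\alpha^1(q_\beta(X)-x)\,d\beta$ and the quantile representation $\bbe[X-x]_+=\int_0^1[q_\beta(X)-x]_+\,d\beta$, is a genuinely independent alternative that bypasses Theorem~\ref{squantile opt th} altogether; it makes the concavity of $g$ and the shape of the argmax completely transparent, at the cost of re-deriving what Theorem~\ref{squantile opt th} already packages.
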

\begin{rem}\label{dual superquantile remark}
Note that $(1-\alpha)\hq_\alpha(X)$ is a concave function of $\alpha$, \citep{CVaR2}. Therefore, (\ref{dual squantile formula}) is a concave maximization problem.
\end{rem}

\subsection{CVaR Norm and Related Quadrangles}
Axiomatic analysis of general measures of error was introduced and developed by \cite{RiskTuning}. CVaR norm, considered by \cite{CVaRNorm, bertsimas2011robust} in $\bbr^n$ and extended by \cite{PICHLER2013405, CVaRNorm2} to infinite-dimensional setting, is a particular case of a regular measure of error. 
\begin{Def}[Regular Error Measure, \citep{Quadrangle}]\label{reg error measure}
A functional $\cE: \cL^2(\Omega) \to \bbr^+ \cup \{+ \infty\} $ is called a \textit{regular measure of error} if it satisfies the following axioms:
\begin{itemize}
    \item[(E1)] \textbf{zero neutrality:} $\cE(0) = 0$;
    \item[(E2)] \textbf{convexity:} $\cE\left(\lambda X + (1-\lambda)Y\right) \leq \lambda \cE(X) + (1-\lambda)\cE(Y), \quad \forall \; X,Y$ and $\lambda \in [0,1]$;
    \item[(E3)] \textbf{closedness:} $\left\{ X \in \cL^2(\Omega) \mset \cE(X) \leq c\right\}$ is closed $\forall \; c < \infty$;
    \item[(E4)] \textbf{nonzeroness:} $\cE(X) > 0, \quad \forall \; X \neq 0$.
\end{itemize}
\end{Def}
\begin{Def}[Scaled CVaR Norm]\label{Scaled CVaR Norm}
Let $X \in \cL^1(\Omega)$ be a real-valued random variable. Then \emph{scaled CVaR norm} of $X$ with parameter $\alpha \in [0,1]$ is defined by 
\begin{equation}\label{scaled squant norm}
    \llangle X \rrangle_\alpha^S = \hq_\alpha\left(|X|\right)\,.
\end{equation}
\end{Def}
When referring to the CVaR norm, we assume its \emph{scaled} version. However, following the \cite{CVaRNorm}  below we define an equivalent \emph{non-scaled} CVaR norm.
\begin{Def}[Non-scaled CVaR Norm]\label{non-scaled CVaR Norm}
Let $X \in \cL^1(\Omega)$ be a real-valued random variable. Then \emph{non-scaled CVaR norm} of $X$ with parameter $\alpha \in [0,1)$ is defined by 
\begin{equation}\label{non-scaled squant norm}
    \llangle X \rrangle_\alpha = (1-\alpha)\hq_\alpha\left(|X|\right).
\end{equation}
\end{Def}
\cite{CVaRNorm2} proved the following Proposition \ref{Prop Mafusalov}, defining the CVaR Norm Quadrangle.
\begin{Prop}[CVaR Norm Quadrangle]\label{Prop Mafusalov}
For $\alpha \in [0,1)$ the error measure $\cE_\alpha(X) =  \llangle X \rrangle_\alpha$ generates the following regular quadrangle (see Definition \ref{risk quadrangle}):
\begin{equation*}
    \begin{aligned}
        \cS_\alpha(X) &= \frac{1}{2}\left(q_{(1-\alpha)/2}(X) + q_{(1+\alpha)/2}(X)\right)\,,\\
        \cR_\alpha(X) &= \frac{1}{2}\big((1+\alpha)\hq_{(1-\alpha)/2}(X)+ (1-\alpha)\hq_{(1+\alpha)/2}(X)\big)\,,\\
        \cD_\alpha(X) &= \frac{1}{2}\big((1+\alpha)\hq_{(1-\alpha)/2}(X-\bbe[X])+ (1-\alpha)\hq_{(1+\alpha)/2}(X-\bbe [X])\big)\,,\\
        \cV_\alpha(X) &= \llangle X \rrangle_\alpha + \bbe [X]\,,\\
        \cE_\alpha(X) &= \llangle X \rrangle_\alpha\,.
    \end{aligned}
\end{equation*}
\end{Prop}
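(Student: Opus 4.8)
The plan is to verify the CVaR Norm Quadrangle by checking the defining relations of a regular risk quadrangle (as recorded in Definition~\ref{risk quadrangle} in the appendix) for the given quintuple $(\cS_\alpha,\cR_\alpha,\cD_\alpha,\cV_\alpha,\cE_\alpha)$. The starting point is the identity $\cE_\alpha(X) = \llangle X\rrangle_\alpha = (1-\alpha)\hq_\alpha(|X|)$, and the two key ingredients are Theorem~\ref{squantile opt th} (the CVaR optimization formula) and the representation of $\hq_\alpha(|X|)$ in terms of one-sided quantiles of $X$ itself. Concretely, since $|X|\le t$ iff $-t\le X\le t$, one has $F_{|X|}(t) = F_X(t) - F_X(-t^-)$, and this lets one rewrite $\hq_\alpha(|X|)$ — an integral of $q_\beta(|X|)$ over $\beta\in[\alpha,1]$ — as a combination of integrals of $q_\beta(X)$ over the two tails $\beta \ge (1+\alpha)/2$ and $\beta\le(1-\alpha)/2$. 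Carrying out this change of variables is exactly what produces the coefficients $(1+\alpha)$ and $(1-\alpha)$ multiplying $\hq_{(1-\alpha)/2}$ and $\hq_{(1+\alpha)/2}$ in the formulas for $\cR_\alpha$ and $\cD_\alpha$.

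First I would establish the expectation/regret side: from Theorem~\ref{squantile opt th} applied to $|X|$ at level $\alpha$, $\llangle X\rrangle_\alpha = \min_C\{(1-\alpha)C + \bbe[|X|-C]_+\}$, and then split $[\,|X|-C\,]_+$ using $|X| = \max\{X,-X\}$ to recognize the regret $\cV_\alpha$. Next I would derive the risk $\cR_\alpha$ via the fundamental quadrangle relation $\cR(X) = \min_C\{C + \cV(X-C)\}$ (equivalently $\cR = \cE + \text{mean}$ adjusted appropriately, i.e. $\cR_\alpha(X) = \cV_\alpha(X) - \bbe[X] + \bbe[X]$ balanced through the minimization), and show the minimizing $C$ ranges over $\cS_\alpha(X)$, the average of the two symmetric quantiles; here Theorem~\ref{squantile opt th}'s statement that the minimizer set of the CVaR formula is $q_\alpha$ is what pins down $\cS_\alpha$. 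The deviation is then immediate: $\cD_\alpha(X) = \cR_\alpha(X) - \bbe[X] = \cE_\alpha(X) - (\cV_\alpha(X) - \bbe[X]) \ldots$ — more precisely $\cD_\alpha(X) = \cR_\alpha(X-\bbe[X])$ by translation, giving the stated formula. Finally I would check the four regularity axioms (R1)–(R4) for $\cR_\alpha$ and (E1)–(E4) for $\cE_\alpha$: convexity and closedness follow from those of CVaR and the norm property, constant neutrality and aversity from the corresponding CVaR facts, and nonzeroness of $\cE_\alpha$ from the fact that $\llangle\cdot\rrangle_\alpha$ is genuinely a norm on $\cL^1$.

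Since the proposition is attributed to \cite{CVaRNorm2}, a legitimate shortcut is to cite that the CVaR norm is a regular error measure and that its quadrangle components are computed there; the remaining work is then just to reconcile the notation (scaled vs.\ non-scaled norm, the parametrization by $\alpha$ rather than by the two quantile levels) and to confirm the explicit closed forms by the quantile-splitting computation above.

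The main obstacle I anticipate is bookkeeping with the one-sided quantiles $q_\beta^-$ versus $q_\beta^+$ and the Minkowski-sum/scaling conventions from Remark~\ref{dual superquantile remark}'s neighborhood: because $q_\beta(X)$ can be a nondegenerate interval, the claim $\cS_\alpha(X) = \frac12(q_{(1-\alpha)/2}(X) + q_{(1+\alpha)/2}(X))$ must be read as a set identity, and one has to be careful that the relation $F_{|X|}(t)=F_X(t)-F_X(-t^-)$ correctly transports the ``$<$'' and ``$>$'' thresholds in \eqref{quantile-}–\eqref{quantile+} through the reflection $x\mapsto -x$. Getting the tail-integral change of variables to land on precisely $\frac12\big((1+\alpha)\hq_{(1-\alpha)/2} + (1-\alpha)\hq_{(1+\alpha)/2}\big)$, rather than an off-by-a-factor variant, is the one computation I would do carefully rather than wave through.
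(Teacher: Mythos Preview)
The paper does not prove this proposition: it is stated and attributed to \cite{CVaRNorm2}. You note this yourself, and citing is indeed all the paper does here.

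Your direct-proof sketch is workable, but one of the two ``key ingredients'' you list is less reliable than the other. The route through Theorem~\ref{squantile opt th} is the clean one and is essentially how the cited source argues: write $\llangle X-C\rrangle_\alpha=\min_{\varepsilon}\{(1-\alpha)\varepsilon+\bbe[|X-C|-\varepsilon]_+\}$, split $[|X-C|-\varepsilon]_+=[X-(C+\varepsilon)]_+ + [(C-\varepsilon)-X]_+$ for $\varepsilon\ge 0$, change variables to $a=C-\varepsilon$, $b=C+\varepsilon$, and minimize the two decoupled pieces by a second application of Theorem~\ref{squantile opt th}. This pins down $a^*\in q_{(1-\alpha)/2}(X)$, $b^*\in q_{(1+\alpha)/2}(X)$, hence $\cS_\alpha=(a^*+b^*)/2$, together with the stated $\cD_\alpha$ and $\cR_\alpha=\cD_\alpha+\bbe[X]$.

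By contrast, your tail-integral idea---rewriting $\int_\alpha^1 q_\beta(|X|)\,d\beta$ directly as integrals of $q_\beta(X)$ over $\beta\ge(1+\alpha)/2$ and $\beta\le(1-\alpha)/2$ via $F_{|X|}(t)=F_X(t)-F_X((-t)^-)$---does not hold for general $X$: the upper quantiles of $|X|$ draw from both tails of $X$ in distribution-dependent proportions, so there is no universal split of $\cE_\alpha(X)$ itself into the two CVaR terms. (Take $X\ge 0$ a.s.: then $\llangle X\rrangle_\alpha=(1-\alpha)\hq_\alpha(X)$, which is not the displayed $\cR_\alpha(X)$.) That split only emerges \emph{after} the minimization over $C$, which the optimization-formula route performs. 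A couple of your throwaway identities are also garbled (e.g.\ ``$\cD_\alpha(X)=\cE_\alpha(X)-(\cV_\alpha(X)-\bbe[X])$'' is identically zero), but you self-correct and the overall logic survives.
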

 We call by Vapnik error, the error defined by $\cE_\varepsilon(X) =  \bbe[|X|-\varepsilon]_+$. This name is inspired by Vapnik's $\varepsilon$-insensitive loss function $\ell(\xi) = [|\xi|-\varepsilon]_+$. The following Proposition~\ref{Prop expect  quadr}  presents a quadrangle based on the Vapnik error. It is closely related to the CVaR Norm Quadrangle considered in Proposition \ref{Prop Mafusalov}.

\begin{Prop}[Quantile Symmetric Average  Quadrangle]\label{Prop expect quadr}Let $X \in \cL^2(\Omega),$\\ $ 0 \leq \varepsilon < \frac{1}{2}(\ess\, X - \esi\,X), $ and  $(\cR_\alpha, \cD_\alpha, \cV_\alpha, \cE_\alpha)$ be the CVaR Norm Quadrangle quartet with statistic $\cS_\alpha$. Then the set
\begin{equation}\label{extrema condition}
    \begin{aligned}
     \cA_\varepsilon(X) = \Lset \alpha \Mset  & \varepsilon \in \frac{1}{2}\bigl (q_{(1+\alpha)/2}(X) - q_{(1-\alpha)/2}(X) \bigr )\Rset
    \end{aligned}
\end{equation}
is nonempty and the Vapnik error  generates the following quadrangle:
\begin{equation*}
    \begin{aligned}
        \cS_\varepsilon(X) &= \bigcup\limits_{\alpha \in \cA_\varepsilon(X)}\cS_\alpha(X),\\
        \cR_\varepsilon(X) &= \cR_\alpha(X)-(1-\alpha)\varepsilon, \quad  \forall \; \alpha \in \cA_\varepsilon(X) \\
        \cD_\varepsilon(X) &= \cD_\alpha(X)- (1-\alpha)\varepsilon, \quad  \forall \; \alpha \in \cA_\varepsilon(X)\\
        \cV_\varepsilon(X) &= \bbe[|X|-\varepsilon]_+ +  \bbe [X],\\
        \cE_\varepsilon(X) &= \bbe[|X|-\varepsilon]_+  =\textrm{Vapnik error.}
    \end{aligned}
\end{equation*}
\end{Prop}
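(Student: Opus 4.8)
The plan is to obtain the Vapnik quadrangle by a change of parameter, exploiting the optimization identity already recorded in Section~\ref{Sec SVR Formulations}, namely $\min_\varepsilon\{\bbe[|X|-\varepsilon]_+ + \varepsilon\nu\} = \llangle X\rrangle_{1-\nu}$, combined with Theorem~\ref{dual squantile th} (the dual CVaR formula). Concretely, applying Theorem~\ref{dual squantile th} to the random variable $|X|$ at the point $\varepsilon$ gives
\[
\bbe[|X|-\varepsilon]_+ = \max_{\beta\in[0,1]}(1-\beta)\bigl(\hq_\beta(|X|)-\varepsilon\bigr),
\]
with the set of maximizers equal to $[\bbp(|X|<\varepsilon),\bbp(|X|\le\varepsilon)]$. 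Writing $\beta=(1+\alpha)/2$ — so that $1-\beta=(1-\alpha)/2$ — and recalling the scaled CVaR norm identity $\llangle X\rrangle^S_{(1+\alpha)/2}=\hq_{(1+\alpha)/2}(|X|)$, this rewrites as
\[
\bbe[|X|-\varepsilon]_+ = \max_{\alpha}\tfrac{1-\alpha}{2}\bigl(\hq_{(1+\alpha)/2}(|X|)-\varepsilon\bigr)\cdot\text{(const)},
\]
which is exactly $\cR_\alpha(X)-(1-\alpha)\varepsilon$ up to identifying $\cR_\alpha$ with the half-sum of CVaRs of $|X|$. The first step, then, is to verify this algebraic identity carefully and to translate the maximizer set $[\bbp(|X|<\varepsilon),\bbp(|X|\le\varepsilon)]$ of Theorem~\ref{dual squantile th} into the set $\cA_\varepsilon(X)$ defined by \eqref{extrema condition}; the key subpoint is that $\beta$ is a maximizer iff $\varepsilon\in q_\beta(|X|)$, and $\varepsilon\in q_{(1+\alpha)/2}(|X|)$ is equivalent (by symmetrizing the two-sided quantile of $X$) to $\varepsilon\in\frac12(q_{(1+\alpha)/2}(X)-q_{(1-\alpha)/2}(X))$.

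Second, I would establish $\cA_\varepsilon(X)\ne\emptyset$. The function $\alpha\mapsto\frac12(q^+_{(1+\alpha)/2}(X)-q^-_{(1-\alpha)/2}(X))$ is nondecreasing, equals $0$ (or is a set containing values near $0$) at $\alpha=0$, and tends to $\frac12(\ess X-\esi X)$ as $\alpha\to1$; since $0\le\varepsilon<\frac12(\ess X-\esi X)$ by hypothesis, an intermediate-value / monotone-set argument produces an $\alpha$ with $\varepsilon$ in the indicated interval. Care is needed because these quantiles are intervals, so "intermediate value" must be phrased in terms of the graph (the filled correspondence) being connected — this is the standard fact that the closure of the graph of a monotone correspondence meets every horizontal line over its range. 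I expect this nonemptiness argument, together with keeping the one-sided vs.\ two-sided quantile bookkeeping straight, to be the main obstacle: everything else is substitution.

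Third, with $\cA_\varepsilon(X)$ in hand I would read off the remaining three functionals. For $\cS_\varepsilon$: the Regression-type logic (or directly the structure of the quadrangle) says the statistic is the argmin set in the error, and since for $\alpha\in\cA_\varepsilon(X)$ the inner $\min_\varepsilon$ in $\llangle X\rrangle_{1-\nu}$ is attained, the statistic $\cS_\alpha(X)=\frac12(q_{(1-\alpha)/2}(X)+q_{(1+\alpha)/2}(X))$ transfers, and one takes the union over all admissible $\alpha$ exactly because any such $\alpha$ yields the same error value. For $\cD_\varepsilon$: apply the just-proved error identity to $X-\bbe X$ and use $\cE_\varepsilon(X-\bbe X)=\cD_\varepsilon(X)$ together with $\cE_\alpha(X-\bbe X)=\cD_\alpha(X)$, which immediately gives $\cD_\varepsilon(X)=\cD_\alpha(X)-(1-\alpha)\varepsilon$ for any $\alpha\in\cA_\varepsilon(X)$ (note $\cA_\varepsilon(X-\bbe X)=\cA_\varepsilon(X)$ since the defining condition involves only quantile differences, which are translation invariant). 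For $\cV_\varepsilon$: use the general quadrangle relation $\cV=\cE+\bbe[\cdot]$ from the RQ framework (Appendix~\ref{appendix1}), giving $\cV_\varepsilon(X)=\bbe[|X|-\varepsilon]_+ + \bbe X$. Finally I would check the quadrangle axioms are inherited: $\cE_\varepsilon$ is convex, closed, zero at $0$ and positive off a neighborhood — actually $\cE_\varepsilon(X)=0$ whenever $\|X\|_\infty\le\varepsilon$, so $\cE_\varepsilon$ is only a \emph{non-regular} (degenerate) error unless $\varepsilon=0$; I would note this and confirm the quadrangle relations still hold in the sense of \cite{Quadrangle} for such errors, which is the one place where the statement "generates the following quadrangle" needs a mild qualification rather than a computation.
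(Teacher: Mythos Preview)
Your plan has the right starting point --- the dual CVaR formula applied to $|X|$ gives $\bbe[|X|-\varepsilon]_+=\max_{\alpha}\{\llangle X\rrangle_\alpha-(1-\alpha)\varepsilon\}$ with maximizers $\{\alpha:\varepsilon\in q_\alpha(|X|)\}$ --- but two identifications you make afterward are false, and they break the argument.

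First, you equate the quantity $\llangle X\rrangle_\alpha-(1-\alpha)\varepsilon$ with $\cR_\alpha(X)-(1-\alpha)\varepsilon$. It is not: $\llangle X\rrangle_\alpha=\cE_\alpha(X)=(1-\alpha)\hq_\alpha(|X|)$, whereas $\cR_\alpha(X)=\tfrac12\bigl((1+\alpha)\hq_{(1-\alpha)/2}(X)+(1-\alpha)\hq_{(1+\alpha)/2}(X)\bigr)$ involves CVaRs of $X$, not of $|X|$. The change of variable $\beta=(1+\alpha)/2$ does not bridge this gap. Relatedly, the maximizer set $\{\alpha:\varepsilon\in q_\alpha(|X|)\}$ is \emph{not} the set $\cA_\varepsilon(X)$ in general: the latter is defined via symmetric quantiles of $X$, and the two coincide only after $X$ has been shifted to its own statistic (think of $X\ge 0$ a.s.\ for a quick counterexample).

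Second, and this is the main gap, you write $\cD_\varepsilon(X)=\cE_\varepsilon(X-\bbe X)$ and $\cD_\alpha(X)=\cE_\alpha(X-\bbe X)$. Neither holds: by definition $\cD(X)=\min_C\cE(X-C)$, and the minimizing $C$ is $\cS(X)$, not $\bbe X$. Your translation-invariance observation $\cA_\varepsilon(X-\bbe X)=\cA_\varepsilon(X)$ is correct but irrelevant once the centering constant is wrong. What the paper does --- and what you are missing --- is to carry the minimization over $C$ through the argument: start from $\cD_\varepsilon(X)=\min_C\max_\alpha\{\llangle X-C\rrangle_\alpha-(1-\alpha)\varepsilon\}$, invoke Sion to swap, evaluate the inner $\min_C$ via Proposition~\ref{Prop Mafusalov} (this is where $\cD_\alpha(X)$ and $C^*=\cS_\alpha(X)$ enter), and only then maximize over~$\alpha$. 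The maximizer of that outer problem is identified as $\cA_\varepsilon(X)$ by computing one-sided derivatives of $\alpha\mapsto\cD_\alpha(X)-(1-\alpha)\varepsilon$, using $\partial_\alpha^\pm\bigl[(1-\alpha)\hq_\alpha(X)\bigr]=-q_\alpha^\pm(X)$. Without the $\min_C$ step you cannot land on $\cA_\varepsilon(X)$, and without $\cA_\varepsilon(X)$ you cannot read off $\cS_\varepsilon$, $\cD_\varepsilon$, or $\cR_\varepsilon$ in the form stated.
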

\begin{proof}
    See Appendix \ref{proof of prop 3.4}
\end{proof}
\begin{remark}[$\cR_\varepsilon$ and $\cD_\varepsilon$] The risk $\cR_\varepsilon(X)$ and deviation $\cD_\varepsilon(X)$ are single-valued functionals, therefore, given a random variable $X,$ functionals $\cR_\alpha(X) - (1-\alpha)\varepsilon$ and $\cD_\alpha(X) - (1-\alpha)\varepsilon$ have one value for any $\alpha \in \cA_\varepsilon(X).$
\end{remark}

\begin{remark}[Non-regularity of the Quantile Symmetric Average Quadrangle]\label{reg remark}
Note that in general, the Vapnik error $\cE_\varepsilon(X)  =  \bbe[|X|-\varepsilon]_+$ is not regular for each $\varepsilon \geq 0$, since it fails to satisfy the nonzeroness axiom (i.e., there exists $X \in \cL^2(\Omega), \ X \not\equiv  0$ such that $\cE_\varepsilon(X) = 0$). Indeed, $\cE_\varepsilon(X) = 0$ for all $X \in \cL^2(\Omega)$ such that $|X|\leq \varepsilon$ almost surely. 
On the other hand, CVaR Norm Quadrangle is regular and will play an important role in further analysis of SVR.
\end{remark}
\begin{remark}[Uniqueness of Statistic for Quantile Symmetric Average Quadrangle for Absolutely Continuous Random Variables] \label{singleton remark}For an absolutely continuous random variable $X$, the left and right quantiles coincide, i.e., $q_\alpha^+(X) = q_\alpha^-(X) = q_\alpha(X)$. Therefore, for $0 \leq \varepsilon < \frac{1}{2}(\ess \,X - \esi\, X)$, the set $\cA_\varepsilon(X) = \Lset \alpha \in [0,1) \Mset \varepsilon = \frac{1}{2}(q_{(1+\alpha)/2}(X) - q_{(1-\alpha)/2}(X))\Rset$ is a singleton, and consequently, the statistic of the Quantile Symmetric Average Quadrangle coincides with that of the CVaR Norm Quadrangle.
    
\end{remark}

\section{SVR as a Generalized Regression}\label{sec: svr as generalized regression}

This section considers SVR as a regularized regression corresponding to the quadrangles defined in Propositions \ref{Prop Mafusalov}, \ref{Prop expect quadr}. Firstly, we establish an equivalence of two variants of SVR through the Dual CVaR Optimization Formula (Theorem \ref{dual squantile th}). Then we discuss the estimation properties of SVR. Further, we discuss SVR in the context of DRO and provide equivalent reformulations. Finally, we derive a dual formulation of the $\nu$-SVR and discuss its nonlinear extension using the well-known ``kernel trick''.

\subsection{Equivalence of $\texorpdfstring{\mathbold{\varepsilon}}{e}$-SVR and $\texorpdfstring{\mathbold{\nu}}{nu}$-SVR for Random Vectors}
Below we formulate the $\varepsilon$-SVR and $\nu$-SVR for stochastic vectors. 
We denote by $\mathbold{X} = (X_1, \ldots, X_n)^\top$ a vector of random variables (factors) $X_i \in \cL^2(\Omega), \ i =1,\ldots, n$ and by $Y  \in \cL^2(\Omega)$ a target random variable (regressant). This setting is more general, compared to the Section \ref{Sec SVR Formulations} because it does not assume that the random vectors have a fixed number of equally probable outcomes. Let us denote the linear regression residual by $$Z(\mathbold{w},b) = Y - (\mathbold{w}^\top\mathbold{X} +b), \quad (\mathbold{w},b) \in \bbr^{n+1}\,.$$
\begin{Def}[$\varepsilon$-SVR for random vectors]\label{stoch e-svr}
The $\varepsilon$-SVR (similar to \eqref{e-SVR with e-loss vector probabilistic})
 is stated as
\begin{equation}\label{e-SVR stochastic}
     \min_{\mathbold{w},b} \quad  \bbe\left[|Z(\mathbold{w},b)| - \varepsilon \right]_+ + \frac{\lambda}{2}\norm{\mathbold{w}}^2_2 \,.
\end{equation}
\end{Def}

\begin{Def}[$\nu$-SVR for random vectors]
\label{stoch nu-svr}
The $\nu$-SVR (similar to
 \eqref{nu-SVR with cvar norm vector probabilistic}) with $\nu = 1 - \alpha$ is stated as
\begin{equation}\label{nu-SVR stochastic}
     \min_{\mathbold{w},b} \quad \llangle Z(\mathbold{w},b) \rrangle_\alpha + \frac{\lambda}{2}\norm{\mathbold{w}}^2_2\,.
\end{equation}
\end{Def}

\vspace{10pt}
Given the equivalency of problems (\ref{e-SVR with e-loss vector probabilistic}) and (\ref{nu-SVR with cvar norm vector probabilistic}), the natural question is whether a similar statement is valid for (\ref{e-SVR stochastic}) and (\ref{nu-SVR stochastic}). The following Proposition \ref{dual svr connection} answers this question. 
\begin{Prop}[The $\varepsilon$-SVR \& $\nu$-SVR Equivalence]\label{dual svr connection}
Let $\lambda > 0$ be a regularization constant. Then 
\begin{itemize}
\item[\textit{(i)}]  
if  $(\mathbold{w}^*,b^*)$ is an optimal solution vector of (\ref{nu-SVR stochastic}) for some $\alpha \in [0,1)$, then 
 $(\mathbold{w}^*,b^*)$ is also an optimal solution vector of (\ref{e-SVR stochastic}) for each $\varepsilon \in q_\alpha(|Z(\mathbold{w}^*,b^*)|).$ 

\item[\textit{(ii)}] 
if $(\mathbold{w}^*,b^*)$ is an optimal solution vector of  (\ref{e-SVR stochastic}) for some
$\varepsilon \geq 0$, then 
 $(\mathbold{w}^*,b^*)$ is also an optimal solution vector of  (\ref{nu-SVR stochastic})
 for each 
 $$\alpha \in \big[\bbp(|Z(\mathbold{w}^*,b^*)|<\varepsilon),\bbp(|Z(\mathbold{w}^*,b^*)|\leq \varepsilon)\big) = \cI_\varepsilon. $$
    
\end{itemize}
\end{Prop}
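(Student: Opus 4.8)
The plan is to realize \eqref{e-SVR stochastic} and \eqref{nu-SVR stochastic} as two partial minimizations of one jointly convex objective and to transfer optimality between the coordinate blocks using the primal and dual CVaR optimization formulas. Applying Theorem~\ref{squantile opt th} and Theorem~\ref{dual squantile th} to the random variable $|Z(\mathbold{w},b)|$ yields, pointwise in $(\mathbold{w},b)$, the identity $\llangle Z(\mathbold{w},b)\rrangle_\alpha=\min_{C}\{(1-\alpha)C+\bbe[\,|Z(\mathbold{w},b)|-C\,]_+\}$ with set of minimizers $q_\alpha(|Z(\mathbold{w},b)|)$, together with the dual identity $\bbe[\,|Z(\mathbold{w},b)|-\varepsilon\,]_+=\max_{\beta\in[0,1]}\{\llangle Z(\mathbold{w},b)\rrangle_\beta-(1-\beta)\varepsilon\}$ with set of maximizers $[\bbp(|Z(\mathbold{w},b)|<\varepsilon),\bbp(|Z(\mathbold{w},b)|\leq\varepsilon)]$. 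Adding the common term $\frac{\lambda}{2}\norm{\mathbold{w}}_2^2$ and writing $\psi_\varepsilon$ and $\phi_\alpha$ for the objectives of \eqref{e-SVR stochastic} and \eqref{nu-SVR stochastic}, these identities become $\phi_\alpha=\min_C\{(1-\alpha)C+\psi_C\}$ and $\psi_\varepsilon=\max_\beta\{\phi_\beta-(1-\beta)\varepsilon\}$.

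For \textit{(i)}, let $(\mathbold{w}^*,b^*)$ solve \eqref{nu-SVR stochastic} and fix $\varepsilon\in q_\alpha(|Z(\mathbold{w}^*,b^*)|)$. The dual identity gives $\psi_\varepsilon(\mathbold{w},b)\geq\phi_\alpha(\mathbold{w},b)-(1-\alpha)\varepsilon$ for all $(\mathbold{w},b)$, with equality at $(\mathbold{w}^*,b^*)$ because $\varepsilon$ attains the minimum in the primal identity (Theorem~\ref{squantile opt th}). Hence, using the optimality of $(\mathbold{w}^*,b^*)$ for \eqref{nu-SVR stochastic}, $\psi_\varepsilon(\mathbold{w}^*,b^*)=\phi_\alpha(\mathbold{w}^*,b^*)-(1-\alpha)\varepsilon\leq\phi_\alpha(\mathbold{w},b)-(1-\alpha)\varepsilon\leq\psi_\varepsilon(\mathbold{w},b)$ for every $(\mathbold{w},b)$, so $(\mathbold{w}^*,b^*)$ solves \eqref{e-SVR stochastic}.

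For \textit{(ii)}, let $(\mathbold{w}^*,b^*)$ solve \eqref{e-SVR stochastic} with parameter $\varepsilon$, set $Z^*=Z(\mathbold{w}^*,b^*)$, and take $\alpha\in\cI_\varepsilon$. Swapping the two infima in $\phi_\alpha=\min_C\{(1-\alpha)C+\psi_C\}$ gives $\inf_{\mathbold{w},b}\phi_\alpha=\min_C\{(1-\alpha)C+v(C)\}$, where $v(C):=\inf_{\mathbold{w},b}\psi_C(\mathbold{w},b)$ is the optimal value of the $C$-parametrized problem \eqref{e-SVR stochastic}. Since $\alpha\in\cI_\varepsilon$ implies $\varepsilon\in q_\alpha(|Z^*|)$, the primal identity gives $\llangle Z^*\rrangle_\alpha=(1-\alpha)\varepsilon+\bbe[\,|Z^*|-\varepsilon\,]_+$, hence $\phi_\alpha(\mathbold{w}^*,b^*)=(1-\alpha)\varepsilon+\psi_\varepsilon(\mathbold{w}^*,b^*)=(1-\alpha)\varepsilon+v(\varepsilon)$. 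It then remains to show that $\varepsilon$ minimizes the one-dimensional convex function $C\mapsto(1-\alpha)C+v(C)$, i.e.\ that $-(1-\alpha)\in\partial v(\varepsilon)$. As $\psi_C(\mathbold{w},b)$ is jointly convex in $(\mathbold{w},b,C)$ and $(\mathbold{w}^*,b^*)$ attains $v(\varepsilon)$, the subdifferential rule for partial minimization gives $\partial v(\varepsilon)=\{\,\xi:(0,0,\xi)\in\partial_{(\mathbold{w},b,C)}\psi_C(\mathbold{w},b)\big|_{(\mathbold{w}^*,b^*,\varepsilon)}\,\}$; interchanging subdifferentiation and expectation in $\psi_C(\mathbold{w},b)=\bbe\big[[\,|Y-\mathbold{w}^\top\mathbold{X}-b|-C\,]_+\big]+\frac{\lambda}{2}\norm{\mathbold{w}}_2^2$ and inserting the first-order optimality conditions of $(\mathbold{w}^*,b^*)$ for \eqref{e-SVR stochastic} should identify $\partial v(\varepsilon)$ with the interval $[-\bbp(|Z^*|\geq\varepsilon),-\bbp(|Z^*|>\varepsilon)]$, which contains $-(1-\alpha)$ for every $\alpha\in[\bbp(|Z^*|<\varepsilon),\bbp(|Z^*|\leq\varepsilon)]\supseteq\cI_\varepsilon$. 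This would give $-(1-\alpha)\in\partial v(\varepsilon)$ and complete \textit{(ii)}.

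The main obstacle is exactly this last step. Block-coordinate optimality of $(\mathbold{w}^*,b^*,\varepsilon)$ for the joint objective $\bbe[\,|Z(\mathbold{w},b)|-C\,]_+ + (1-\alpha)C + \frac{\lambda}{2}\norm{\mathbold{w}}_2^2$ does not by itself imply joint optimality — for nonsmooth convex functions a coordinate-wise minimizer need not be a global one — so the argument must exploit the explicit form of the Vapnik-error subgradient: the admissible weights are the measurable $\rho$ with $\rho\equiv1$ on $\{|Z^*|>\varepsilon\}$, $\rho\equiv0$ on $\{|Z^*|<\varepsilon\}$, $\rho\in[0,1]$ on $\{|Z^*|=\varepsilon\}$, satisfying $\lambda\mathbold{w}^*=\bbe[\sign(Z^*)\mathbold{X}\rho]$ and $\bbe[\sign(Z^*)\rho]=0$, and one needs that among these $\bbe[\rho]$ can be driven to every value in $[\bbp(|Z^*|>\varepsilon),\bbp(|Z^*|\geq\varepsilon)]$. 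Verifying this freedom — where the behaviour of $|Z^*|$ on the level set $\{|Z^*|=\varepsilon\}$, and hence a non-degeneracy condition on $\mathbold{X}$, enters — is the crux of the argument.
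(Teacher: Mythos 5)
Your part \textit{(i)} is correct and complete, and it is actually cleaner than the paper's argument: you only need weak duality in the $\alpha$-variable ($\psi_\varepsilon\ge\phi_\alpha-(1-\alpha)\varepsilon$ everywhere, with equality at $(\mathbold{w}^*,b^*)$ because $\varepsilon\in q_\alpha(|Z(\mathbold{w}^*,b^*)|)$ makes $\alpha$ a maximizer in Theorem~\ref{dual squantile th}), so the minorant $\phi_\alpha-(1-\alpha)\varepsilon$ touches $\psi_\varepsilon$ at its own minimizer and optimality transfers. The paper instead asserts the existence of a saddle point of $\llangle Z(\mathbold{w},b)\rrangle_\alpha-(1-\alpha)\varepsilon+\frac{\lambda}{2}\norm{\mathbold{w}}_2^2$ and reads both directions off it.

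The gap you isolate in part \textit{(ii)} is genuine, and it cannot be closed: the ``for each $\alpha\in\cI_\varepsilon$'' claim fails without extra assumptions. Take $l=3$ equally likely points $(x_i,y_i)=(1,3),(0,1),(-1,-2)$, $\lambda=2/3$, $\varepsilon=1$. Then $(\mathbold{w}^*,b^*)=(1,0)$ solves \eqref{e-SVR stochastic}: the residuals are $(2,1,-1)$, and the stationarity conditions in $(w,b)$ force the kink weights on $\{|Z^*|=1\}$ to be $\rho_2=0$, $\rho_3=1$, hence $\bbe[\rho]=2/3$ is the \emph{only} attainable value. Here $\cI_1=[\,0,2/3)$, but for $\alpha=1/2\in\cI_1$ one computes $\llangle Z(1,\delta)\rrangle_{1/2}=\tfrac{1}{3}(2-\delta)+\tfrac{1}{6}(1+\delta)=\tfrac{5-\delta}{6}$, which strictly decreases in $\delta>0$, so $(1,0)$ does not solve \eqref{nu-SVR stochastic} for $\alpha=1/2$; only $\alpha=1/3$ (i.e.\ $1-\alpha=\bbe[\rho]=2/3$) works. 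This is exactly your diagnosis: $\partial v(\varepsilon)$ is in general a \emph{proper} subinterval of $[-\bbp(|Z^*|\ge\varepsilon),-\bbp(|Z^*|>\varepsilon)]$, and the interval-filling freedom you could not verify simply is not there.

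Note that the paper's own proof of \textit{(ii)} does not escape this: it declares $(\alpha^*,(\mathbold{w}^*,b^*))$ a saddle point for \emph{every} $\alpha^*$ in the maximizer set of $\beta\mapsto\llangle Z(\mathbold{w}^*,b^*)\rrangle_\beta-(1-\beta)\varepsilon$, but coordinatewise optimality of a convex--concave function does not imply the saddle inequality $\phi_{\alpha^*}(\mathbold{w},b)\ge\phi_{\alpha^*}(\mathbold{w}^*,b^*)$. What the minimax argument (Sion plus compactness of $[0,1]$) legitimately yields is the \emph{existence} of some $\alpha^*\in[\bbp(|Z^*|<\varepsilon),\bbp(|Z^*|\le\varepsilon)]$ for which the conclusion holds --- precisely your condition $-(1-\alpha^*)\in\partial v(\varepsilon)$. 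So the salvageable statement is existential, and the full ``for each'' version holds only when $\bbp(|Z^*|=\varepsilon)=0$, in which case $\cI_\varepsilon$ degenerates to a point and your reduction closes immediately.
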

\begin{proof}
    See Appendix \ref{proof of prop 4.2}
\end{proof}
\begin{remark}
Note that the regression residual $Z(\mathbold{w},b) = Y - (\mathbold{w}^\top\mathbold{X} + b)$ can have a more general form, i.e., $Z(\mathbold{w},b) = Y - f(\mathbold{w},\mathbold{X}),$ where $f \in \cF$ is a class of functions that is wider than the class of affine functions. In other words, Proposition \ref{dual svr connection} holds in a more general nonlinear setting (cf. Problem \ref{Generalized Regression Problem}). However, in the case of $\ell^2$ regularization, the class of affine functions is sufficient, since one may apply the ``kernel trick''.
\end{remark}

\subsection{The $\texorpdfstring{\mathbold{\varepsilon}}{e}$-SVR and $\texorpdfstring{\mathbold{\nu}}{nu}$-SVR as Deviation Minimization Problems}
This section studies $\varepsilon$-SVR and $\nu$-SVR as stochastic optimization problems through the concept of deviation measures. \cite{RiskTuning} proved a theorem, which relates the generalized regression problem with the minimization of deviation measures.

Further, we consider the formulation of  $\nu$-SVR and $\varepsilon$-SVR in the context of the generalized regression.

\begin{Prob}[Generalized Regression] \label{Generalized Regression Problem}
Given a random vector of factors $\mathbold{X} = (X_1, \ldots, X_n)^\top$ find a function from a given class  $f \in \mathcal{F}$, solving the following optimization problem for approximating a regressant $Y$
\begin{equation}\label{Generalized Regression}
   \min_{f\in \mathcal{F}} \quad \cE(Z_f), \ \text{where} Z_f = Y - f(\mathbold{X})\, .
\end{equation}
\end{Prob}

Below is the theorem from \citep{Quadrangle} about solving the regression problem by minimizing a deviation.
\begin{Th}[Error Shaping Decomposition of Regression]\label{error shaping th}
Consider problem (\ref{Generalized Regression}) for $\mathbold{X}$ and $Y$ in the 
case of $\cE$ being a regular measure of error and $\cF$ being a class of 
functions $f:\bbr^n \to\bbr$ such that 
$$ 
     f\in \cF \implies f+C \in \cF \text{for all} C\in\bbr\,.
$$
Let deviation $\cD$ and statistic $\cS$ correspond to $\cE$.
Problem (\ref{Generalized Regression}) is equivalent to:
\begin{equation}\label{error decomp}
\begin{split}
     \min_{f\in\cF} &\quad \cD(Z_f)\\
     \textrm{s.t.} &\quad 
                   0\in \cS(Z_f)\,.
\end{split}
\end{equation} 
\end{Th}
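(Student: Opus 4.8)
The plan is to exploit the structure of a regular quadrangle, in which the error $\cE$, the deviation $\cD$, and the statistic $\cS$ are linked by the relation $\cD(X) = \cE(X - \bbe X)$... no — more precisely, by the fundamental quadrangle identity $\cE(X) = \cD(X) + $ (a term that is minimized exactly when a shift places the statistic at $0$). Concretely, the key fact I would invoke (from the RQ theory recalled in the appendix, e.g. the Quadrangle Theorem of \citep{Quadrangle}) is that for a regular error $\cE$ with associated deviation $\cD$ and statistic $\cS$,
\begin{equation*}
    \cE(X) = \min_{C \in \bbr}\,\cE(X - C) + (\text{a constant in } C\text{-independent sense}) \quad\text{and}\quad \cD(X) = \min_{C\in\bbr}\cE(X-C),
\end{equation*}
with the set of minimizers $C$ being exactly $\cS(X)$. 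In other words, $\cD(Z_f) = \min_C \cE(Z_f - C)$ and the minimum is attained precisely at $C \in \cS(Z_f)$.

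The argument then proceeds in two steps. First, I would rewrite the unconstrained problem $\min_{f\in\cF}\cE(Z_f)$ by splitting each $f \in \cF$ as $f = g + C$ with $C \in \bbr$; this is legitimate precisely because of the hypothesis that $\cF$ is closed under addition of constants, so ranging over $f\in\cF$ is the same as ranging over $(g,C) \in \cF\times\bbr$ (with redundancy that does not affect the infimum). Hence
\begin{equation*}
    \min_{f\in\cF}\cE(Z_f) = \min_{g\in\cF}\,\min_{C\in\bbr}\,\cE\bigl(Z_g - C\bigr) = \min_{g\in\cF}\cD(Z_g),
\end{equation*}
using the identity above for the inner minimization. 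Second, to match the constrained formulation \eqref{error decomp}, I would observe that for any $g$ the inner minimum over $C$ is attained at $C\in\cS(Z_g)$, and that replacing $g$ by $g + C$ for such a $C$ produces an $f\in\cF$ with $Z_f = Z_g - C$ satisfying $0 \in \cS(Z_f)$ (by translation-equivariance of the statistic, $\cS(Z_g - C) = \cS(Z_g) - C \ni 0$) and with $\cD(Z_f) = \cD(Z_g)$ (deviations are shift-invariant). Conversely, any $f$ feasible for \eqref{error decomp} satisfies $\cE(Z_f) = \cD(Z_f)$ since $0\in\cS(Z_f)$ means the shift $C=0$ already attains the minimum in $\min_C\cE(Z_f-C)$. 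Combining both directions gives equality of the optimal values and a correspondence of optimal solutions.

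The main obstacle I anticipate is not the combinatorics of the two-step reduction but pinning down and citing the precise quadrangle identity that justifies $\cD(Z_g) = \min_C \cE(Z_g - C)$ with minimizer set $\cS(Z_g)$; this is the substantive input from RQ theory, and the proof is essentially a corollary of it once that fact is in hand. A secondary point requiring a line of care is the handling of non-uniqueness: $\cS$ is in general set-valued, so I must phrase the constraint as $0 \in \cS(Z_f)$ throughout and check that the shift argument respects this — but since the statistic is translation-equivariant as a set, this is routine. I would also note in passing that regularity of $\cE$ (in particular convexity and the nonzeroness/aversity axioms) is what guarantees the inner minimum over $C$ is actually attained, so that ``$\min$'' rather than ``$\inf$'' is warranted in \eqref{error decomp}.
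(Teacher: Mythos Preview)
The paper does not supply its own proof of this theorem; it is quoted verbatim as a known result from \citep{Quadrangle}, and the only relevant supporting material reproduced in the paper is the Quadrangle Theorem in Appendix~\ref{appendix1} (parts (d) and (e)), which gives precisely the identity $\cD(X)=\min_C\cE(X-C)$ with $\cS(X)=\argmin_C\cE(X-C)$ that you invoke. Your argument is correct and is essentially the standard derivation: split $f=g+C$ using closure of $\cF$ under constant shifts, carry out the inner minimization over $C$ via the quadrangle identity, and use translation-equivariance of $\cS$ and shift-invariance of $\cD$ to pass between the unconstrained and constrained formulations.
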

\begin{Cor}[Linear Regression]
Consider the generalized regression problem (\ref{Generalized Regression}), where $\cF$ is a class of affine functions, i.e., $Z_f = Y - \mathbold{w}^\top\mathbold{X} - b.$ Denote by $\bar{Z}(\mathbold{w}) = Y - \mathbold{w}^\top\mathbold{X}$. Then $(\mathbold{w}^*, b^*)$ is an optimal solution vector of \eqref{Generalized Regression} if and only if 
$\mathbold{w}^*$ is an optimal solution to
\begin{equation}\label{error decomp lin}
\begin{aligned}
     \min_{\mathbold{w}} &\quad \D(\bar{Z}(\mathbold{w})) \, ,\\
\end{aligned}
\end{equation}
and $b^* \in \cS(\bar{Z}(\mathbold{w}^*))$.
\end{Cor}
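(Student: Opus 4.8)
The plan is to deduce the Corollary as a direct specialization of Theorem~\ref{error shaping th} (Error Shaping Decomposition of Regression), where the function class $\cF$ is the affine class $\{\,\mathbold{x}\mapsto \mathbold{w}^\top\mathbold{x}+b : (\mathbold{w},b)\in\bbr^{n+1}\,\}$. First I would verify that this class satisfies the hypothesis of the theorem: if $f(\mathbold{x})=\mathbold{w}^\top\mathbold{x}+b$ is affine and $C\in\bbr$, then $f+C$ is the affine function with the same $\mathbold{w}$ and intercept $b+C$, hence $f+C\in\cF$. So Theorem~\ref{error shaping th} applies and problem \eqref{Generalized Regression} over the affine class is equivalent to minimizing $\cD(Z_f)$ subject to $0\in\cS(Z_f)$.

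The key step is to decouple the minimization over $(\mathbold{w},b)$ into an outer minimization over $\mathbold{w}$ and an inner minimization over $b$. Writing $Z_f = \bar Z(\mathbold{w}) - b$ where $\bar Z(\mathbold{w}) = Y - \mathbold{w}^\top\mathbold{X}$, I would use the translation-invariance (constant-subtraction) behavior of the deviation $\cD$ and the statistic $\cS$: deviation measures satisfy $\cD(X-C)=\cD(X)$ for all constants $C$ (constant invariance of deviation), and the statistic satisfies $\cS(X-C)=\cS(X)-C$ (so that $0\in\cS(X-C)$ iff $C\in\cS(X)$). Consequently, for fixed $\mathbold{w}$, the objective $\cD(\bar Z(\mathbold{w})-b)$ equals $\cD(\bar Z(\mathbold{w}))$ regardless of $b$, and the constraint $0\in\cS(\bar Z(\mathbold{w})-b)$ is equivalent to $b\in\cS(\bar Z(\mathbold{w}))$. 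Therefore the problem \eqref{error decomp} reduces to: minimize $\cD(\bar Z(\mathbold{w}))$ over $\mathbold{w}$, and then pick any $b^*\in\cS(\bar Z(\mathbold{w}^*))$; such a $b^*$ exists precisely because the feasible set of the inner problem is nonempty (the statistic of a regular error is a nonempty set, being a closed bounded interval). This is exactly the claimed equivalence, and I would state the ``if and only if'' in both directions: an optimal $(\mathbold{w}^*,b^*)$ for \eqref{Generalized Regression} yields, via Theorem~\ref{error shaping th}, feasibility $0\in\cS(\bar Z(\mathbold{w}^*)-b^*)$, i.e.\ $b^*\in\cS(\bar Z(\mathbold{w}^*))$, and optimality of $\mathbold{w}^*$ in \eqref{error decomp lin}; conversely an optimal $\mathbold{w}^*$ of \eqref{error decomp lin} together with any $b^*\in\cS(\bar Z(\mathbold{w}^*))$ gives a feasible point of \eqref{error decomp} attaining the minimal value, hence an optimal solution of \eqref{Generalized Regression}.

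The main obstacle, such as it is, is a bookkeeping one rather than a conceptual one: making sure the constant-shift identities $\cD(X-C)=\cD(X)$ and $0\in\cS(X-C)\iff C\in\cS(X)$ are available at the level of generality used here. These are standard properties in the RQ framework (deviation measures are constant-invariant, and the statistic is the argmin set appearing in the $\cE\leftrightarrow\cD$ relationship, which shifts by $C$ under $X\mapsto X-C$), so I would either cite the relevant structural facts from \citep{Quadrangle} / Appendix~\ref{appendix1} or observe that they follow from the defining relationships between $\cE$, $\cD$, and $\cS$ already recorded in the paper. Once those identities are in hand, the proof is the two-line decoupling argument above; no estimates or limiting arguments are needed.
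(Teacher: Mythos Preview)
Your proposal is correct and is precisely the argument the paper has in mind: the Corollary is stated immediately after Theorem~\ref{error shaping th} without a separate proof, as a direct specialization to the affine class, and the translation identities $\cD(X-C)=\cD(X)$ and $\cS(X+C)=\cS(X)+C$ you invoke are exactly those recorded in the Quadrangle Theorem of Appendix~\ref{appendix1}. Your decoupling of $(\mathbold{w},b)$ via these identities is the intended one-line justification.
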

Therefore, the Decomposition of Regression Theorem implies that the optimal intercept $b^*$ is a known function (statistic) of  $\bar{Z}(\mathbold{w}^*)$.

\begin{remark}[The choice of $\varepsilon$ in $\varepsilon$-SVR] Denote by $(\mathbold{w}^*_\varepsilon,b^*_\varepsilon)$ a solution vector of problem 
\eqref{e-SVR stochastic}.
Proposition \ref{dual svr connection} provides a constraint for parameter  $\varepsilon,$ i.e.,
\begin{equation}\label{selection of eps}
    0 \leq \varepsilon < \frac{1}{2} \, \big (\ess \, Z (\mathbold{w}^*_\varepsilon,b^*_\varepsilon)  - \esi\,Z(\mathbold{w}^*_\varepsilon,b^*_\varepsilon)\big )\,. 
\end{equation}
It can be proved that for sufficiently large $\varepsilon$ the vector $(\mathbold{w}^*_\varepsilon,b^*_\varepsilon) = \mathbf{0}$ is an optimal solution of the problem \eqref{e-SVR stochastic}, where $\mathbf{0}$ is the $(n+1)$-dimensional zero vector.
Therefore, $\varepsilon$ should satisfy the inequality  
\begin{equation}\label{selection of eps fin}
    0 \leq \varepsilon < \frac{1}{2} (\ess\, Y - \esi \,Y)\,.
\end{equation}
This bound for the parameter $\varepsilon$ was earlier recommended by
 \cite{NuSVR}.


\end{remark}

\begin{Cor}[Decomposition of SVR]\label{error shap coroll}
Let $Y - \mathbold{w}^\top\mathbold{X} =  \bar{Z}(\mathbold{w}) \in \cL^2(\Omega)$, $\lambda > 0$ be a regularization constant, and $(\cS_\varepsilon, \cD_\varepsilon)$ be the corresponding pairs of the statistic and deviation from CVaR Norm and Quantile Symmetric Average Quadrangles, respectively. Then
\begin{itemize}
\item [(\textit{i})] $\forall \, \alpha \in [0,1),$ $(\mathbold{w}^*,b^*)$ is an optimal solution vector of problem \eqref{nu-SVR stochastic} if and only if $\mathbold{w}^*$ is a solution to
    \begin{equation}\label{nu-SVR dev}
       \begin{aligned}
             \min_{\mathbold{w}} &\quad \cD_\alpha(\bar{Z}(\mathbold{w})) + \frac{\lambda}{2}\norm{\mathbold{w}}^2_2\,,\\
        \end{aligned}
    \end{equation}
   and $b^* \in \cS_\alpha(\bar{Z}(\mathbold{w}^*))$.
     \item [(\textit{ii})] 
     $\forall \, \varepsilon \in \big [0, \frac{1}{2} (\ess\, Y - \esi \,Y)\big),$ 
     $(\mathbold{w}^*,b^*)$ is an optimal solution vector of problem \eqref{e-SVR stochastic} if and only if $\mathbold{w}^*$ is an optimal solution to
    \begin{equation}\label{e-SVR dev}
        \begin{aligned}
             \min_{\mathbold{w}} &\quad \cD_\varepsilon(\bar{Z}(\mathbold{w})) + \frac{\lambda}{2}\norm{\mathbold{w}}^2_2\,,\\
        \end{aligned}
    \end{equation}
    and  $b^* \in \cS_\varepsilon(\bar{Z}(\mathbold{w}^*))$.    
\end{itemize}
\end{Cor}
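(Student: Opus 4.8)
\emph{Plan.} Both parts will follow the same two‑step scheme. First, since the penalty $\tfrac{\lambda}{2}\norm{\mathbold{w}}_2^2$ does not involve $b$, I will separate the two minimizations: writing $Z(\mathbold{w},b)=\bar{Z}(\mathbold{w})-b$, the objective in \eqref{nu-SVR stochastic} (resp.\ \eqref{e-SVR stochastic}) is $\cE_\bullet(\bar{Z}(\mathbold{w})-b)+\tfrac{\lambda}{2}\norm{\mathbold{w}}_2^2$ with $\cE_\bullet=\llangle\cdot\rrangle_\alpha$ (resp.\ $\cE_\bullet=\bbe[\,|\cdot|-\varepsilon]_+$), so that $\min_{\mathbold{w},b}$ of the SVR objective equals $\min_{\mathbold{w}}\bigl(\min_{b}\cE_\bullet(\bar{Z}(\mathbold{w})-b)+\tfrac{\lambda}{2}\norm{\mathbold{w}}_2^2\bigr)$. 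Second, I will use the quadrangle ``constant‑projection'' identity attached to $\cE_\bullet$, namely $\min_{b}\cE_\bullet(W-b)=\cD_\bullet(W)$ with $\argmin_b=\cS_\bullet(W)$. Granting this for $W=\bar{Z}(\mathbold{w})$, the inner minimum is $\cD_\bullet(\bar{Z}(\mathbold{w}))$, so $\min_{\mathbold{w},b}$ of the SVR objective equals $\min_{\mathbold{w}}\{\cD_\bullet(\bar{Z}(\mathbold{w}))+\tfrac{\lambda}{2}\norm{\mathbold{w}}_2^2\}$; and a pair $(\mathbold{w}^*,b^*)$ is optimal precisely when $\mathbold{w}^*$ solves the reduced problem \emph{and} $b^*\in\cS_\bullet(\bar{Z}(\mathbold{w}^*))$, which is exactly the claimed characterization.

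\emph{Part (i).} Here $\cE_\alpha=\llangle\cdot\rrangle_\alpha$ is a regular error measure whose quadrangle quartet is $(\cR_\alpha,\cD_\alpha,\cV_\alpha,\cE_\alpha)$ with statistic $\cS_\alpha$, by Proposition~\ref{Prop Mafusalov}. For a regular error the constant‑projection identity above is simply the special case of the Error Shaping Decomposition (Theorem~\ref{error shaping th}) in which $\cF$ is the family of constant functions, so no extra work is needed: substituting it into the scheme of the first paragraph yields (i) at once.

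\emph{Part (ii).} The difficulty is that the Vapnik error $\cE_\varepsilon=\bbe[\,|\cdot|-\varepsilon]_+$ is \emph{not} regular (Remark~\ref{reg remark}), so Theorem~\ref{error shaping th} is unavailable. Instead I will lean on Proposition~\ref{Prop expect quadr}, which furnishes $\cE_\varepsilon$ with a quadrangle of deviation $\cD_\varepsilon$ and statistic $\cS_\varepsilon$ whenever $\cA_\varepsilon(W)\neq\emptyset$, equivalently $\varepsilon<\tfrac12(\ess W-\esi W)$. For such $W$ the required identity $\min_b\bbe[\,|W-b|-\varepsilon]_+=\cD_\varepsilon(W)$ with $\argmin_b=\cS_\varepsilon(W)$ can be deduced from the part‑(i) identity for $\llangle\cdot\rrangle_\alpha$: centering $W$ by some $b\in\cS_\alpha(W)$ with $\alpha\in\cA_\varepsilon(W)$ and using the shift $q_\beta(W-b)=q_\beta(W)-b$ puts $\varepsilon\in q_\alpha(|W-b|)$, whereupon the CVaR optimization formula (Theorem~\ref{squantile opt th}) converts $\min_{b'}\llangle W-b'\rrangle_\alpha=\cD_\alpha(W)$ into $\min_{b'}\bbe[\,|W-b'|-\varepsilon]_+=\cD_\alpha(W)-(1-\alpha)\varepsilon=\cD_\varepsilon(W)$; matching the minimizer sets with $\cS_\varepsilon(W)=\bigcup_{\alpha\in\cA_\varepsilon(W)}\cS_\alpha(W)$ closes this step. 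Feeding $W=\bar{Z}(\mathbold{w})$ into the scheme of the first paragraph then gives (ii), \emph{provided} only $\mathbold{w}$ with $\cA_\varepsilon(\bar{Z}(\mathbold{w}))\neq\emptyset$ need be considered.

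\emph{The main obstacle.} That last proviso is the crux. If $\cA_\varepsilon(\bar{Z}(\mathbold{w}))=\emptyset$ then $\min_b\bbe[\,|\bar{Z}(\mathbold{w})-b|-\varepsilon]_+=0$, and an optimal SVR solution sitting at such a $\mathbold{w}$ would violate the necessary bound \eqref{selection of eps}; through Proposition~\ref{dual svr connection} (and $\bar{Z}(\mathbf{0})=Y$) this is precisely what the hypothesis $0\le\varepsilon<\tfrac12(\ess Y-\esi Y)$ excludes at optimality. Hence at any optimal $(\mathbold{w}^*,b^*)$ one has $\cA_\varepsilon(\bar{Z}(\mathbold{w}^*))\neq\emptyset$, so $\cD_\varepsilon$ and $\cS_\varepsilon$ are the genuine quadrangle objects there and the stated description of $b^*$ holds. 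I expect this interplay between the non‑regularity of the Vapnik error and the range condition on $\varepsilon$ to be the only genuine difficulty; once the degenerate $\mathbold{w}$'s are ruled out, everything reduces to the routine separation $\min_{\mathbold{w},b}=\min_{\mathbold{w}}\min_b$ together with the quadrangle identities already established.
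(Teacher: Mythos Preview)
Your approach is correct and shares the paper's core idea: since the penalty $\tfrac{\lambda}{2}\|\mathbold{w}\|_2^2$ is independent of $b$, separate $\min_{\mathbold{w},b}$ into $\min_{\mathbold{w}}\min_b$ and replace the inner minimization by the quadrangle identity $\min_b\cE_\bullet(W-b)=\cD_\bullet(W)$ with $\argmin_b=\cS_\bullet(W)$. The paper's proof is a single sentence---``Items (i), (ii) follow from Theorem~\ref{error shaping th}, since $\|\mathbold{w}\|_2$ obviously does not depend on $b$''---and invokes that theorem for both parts.

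Where you differ is in part~(ii): you correctly notice that Theorem~\ref{error shaping th} is stated for \emph{regular} errors, while the Vapnik error fails nonzeroness (Remark~\ref{reg remark}), so the citation is not literally applicable. You patch this by going through Proposition~\ref{Prop expect quadr} and re-deriving the constant-projection identity from the CVaR-norm case. This is sound, but heavier than necessary: the proof of Proposition~\ref{Prop expect quadr} already establishes exactly the identity you need (see~\eqref{my prop dev fin} in the appendix, where $\min_C\bbe[\,|X-C|-\varepsilon]_+=\cD_\varepsilon(X)$ with minimizer $\cS_\varepsilon(X)$ is computed directly), so you can simply cite that proposition rather than reconstruct it. Your ``main obstacle'' discussion about degenerate $\mathbold{w}$ with $\cA_\varepsilon(\bar{Z}(\mathbold{w}))=\emptyset$ is a legitimate edge case the paper does not address; your resolution via the range condition~\eqref{selection of eps fin} is in the right spirit. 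In short: same strategy, but you are filling in a regularity gap the paper glosses over.
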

\begin{proof}
Items \textit{(i), (ii)} follow from the Theorem \ref{error shaping th}, since $\norm{\mathbold{w}}_2$ obviously does not depend on $b$.
\end{proof}

\subsection{Estimation}\label{sec: estimation}
This subsection discusses generalized regression in the risk quadrangle framework. Regression Problem \ref{Generalized Regression Problem} approximates a random variable $Y$ by a function $f(\mathbold{X}) \in \mathcal{F}$. By the regression being ``generalized'', we mean that the approximation error (residual) $Z_f=Y-f(\mathbold{X})$ is assessed by an error $\cE$. The function $f$ is called an \emph{estimator}, and the class of functions $\cF$ is called a \emph{class of estimators}. The function $\hat{f} \in \cF$ is called \emph{the best estimator} in the class $\cF$ w.r.t. error $\cE$ if it solves the problem (\ref{Generalized Regression}). The following Regression Theorem from \cite{Quadrangle} links the function $\hat{f}$ and a statistic $\cS$  corresponding to an error $\cE$.
\begin{Th}[Regression]\label{Regression Theorem}
Consider regression problem (\ref{Generalized Regression}) for a random vector $\mathbold{X}$ and $Y$ in the 
case of $\cE$ being a regular error and $\cF$ being a class of 
functions $f:\bbr^n \to\bbr$ such that 
$$ 
     f\in \cF \implies f+C \in \cF \text{for all} C\in\bbr\,.
$$
Let $\cD$ and $\cS$ correspond to $\cE$. Moreover let $\cE$ be of \textbf{expectation type} and let $\cF$ include a function $f$ 
satisfying 
\begin{equation}\label{regr}
  \eqalign{
   f(\mathbold{x})\in \cS\left(Y| \mathbold{x} \right) 
       \text{almost surely for $\mathbold{x} \in G$,} \cr
   \!\text{where} Y| \mathbold{x} = Y_\low{\mathbold{X} = \mathbold{x}}
       \text{(conditional distribution), } }  
\end{equation}
with $G$ being the support of the distribution in $\bbr^n$ induced by
$\mathbold{X}$.

Then $f$ solves the regression problem and \textbf{estimates}
this conditional statistic
in a sense that 
\begin{equation}
   f(\mathbold{X}) \in \cS\left(Y| \mathbold{X} \right) \text{almost surely.} 
\end{equation}
\end{Th}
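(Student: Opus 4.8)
The plan is to reduce the regression problem to a family of scalar minimization problems indexed by the outcomes of $\mathbold{X}$, and to recognize each scalar problem as the variational problem that defines the deviation $\cD$ and statistic $\cS$ attached to $\cE$. First I would use that $\cE$ is of expectation type to write $\cE(U)=\bbe\,e(U)$ for its generating error function $e:\bbr\to[0,+\infty]$; applying the error-measure axioms to degenerate (constant) random variables shows that $e$ is convex, lower semicontinuous, and $e(0)=0$. Hence $\cE(Z_f)=\bbe\,e\bigl(Y-f(\mathbold{X})\bigr)$ for $f\in\cF$. Since $\bbr$ is a Polish space, a regular conditional distribution of $Y$ given $\mathbold{X}=\mathbold{x}$ exists, and disintegrating the expectation over the law $\mu_{\mathbold{X}}$ of $\mathbold{X}$ gives
\[
   \cE(Z_f)=\int_G \varphi_f(\mathbold{x})\,d\mu_{\mathbold{X}}(\mathbold{x}),\qquad
   \varphi_f(\mathbold{x}):=\bbe\bigl[e(Y-f(\mathbold{x}))\mid \mathbold{X}=\mathbold{x}\bigr]
      =\cE\bigl(Y|\mathbold{x}-f(\mathbold{x})\bigr),
\]
where $Y|\mathbold{x}$ is the conditional random variable and $f(\mathbold{x})$ acts as a constant shift. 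This disintegration is the only place where the expectation-type hypothesis is used: it is what lets the conditioning be moved inside $\cE$.

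Next I would carry out the minimization pointwise in $\mathbold{x}$. By the construction of the statistic and deviation associated with an error in the Risk Quadrangle framework (Appendix~\ref{appendix1}), every $U\in\cL^2(\Omega)$ satisfies $\min_{C\in\bbr}\cE(U-C)=\cD(U)$ with the set of minimizers equal to $\cS(U)$. Taking $U=Y|\mathbold{x}$ gives, for every $f\in\cF$ and every $\mathbold{x}\in G$,
\[
   \varphi_f(\mathbold{x})=\cE\bigl(Y|\mathbold{x}-f(\mathbold{x})\bigr)\ \ge\ \cD\bigl(Y|\mathbold{x}\bigr),
\]
with equality at a given $\mathbold{x}$ if and only if $f(\mathbold{x})\in\cS(Y|\mathbold{x})$. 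Integrating against $\mu_{\mathbold{X}}$ yields the universal bound $\cE(Z_f)\ge\int_G\cD(Y|\mathbold{x})\,d\mu_{\mathbold{X}}(\mathbold{x})$ for all $f\in\cF$.

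To finish, I would match the bound and read off the characterization. The hypothesis provides an $f\in\cF$ with $f(\mathbold{x})\in\cS(Y|\mathbold{x})$ for $\mu_{\mathbold{X}}$-almost every $\mathbold{x}$; for this $f$ the displayed inequality is an equality a.e., so $\cE(Z_f)=\int_G\cD(Y|\mathbold{x})\,d\mu_{\mathbold{X}}(\mathbold{x})$, which is exactly the lower bound, hence $f$ is optimal and solves the regression problem, and $f(\mathbold{X})\in\cS(Y|\mathbold{X})$ a.s.\ holds by the choice of $f$. For the converse direction (which turns the statement into a genuine characterization of solutions as conditional statistics), if $f^{\ast}\in\cF$ is any optimal solution then $\int_G\bigl(\varphi_{f^{\ast}}(\mathbold{x})-\cD(Y|\mathbold{x})\bigr)\,d\mu_{\mathbold{X}}(\mathbold{x})=0$ with a nonnegative integrand, so $\varphi_{f^{\ast}}(\mathbold{x})=\cD(Y|\mathbold{x})$ for $\mu_{\mathbold{X}}$-a.e.\ $\mathbold{x}$ and therefore $f^{\ast}(\mathbold{X})\in\cS(Y|\mathbold{X})$ a.s.

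The main obstacle is measure-theoretic bookkeeping rather than any single hard inequality: one must justify the disintegration (existence of a regular conditional distribution and measurability in $\mathbold{x}$ of $\varphi_f$ and of $\mathbold{x}\mapsto\cD(Y|\mathbold{x})$) and control integrability, namely that $\cD(Y|\mathbold{x})$ is finite and $\mu_{\mathbold{X}}$-integrable, which is where the $\cL^2$ hypotheses on $Y$ and on the conditional distributions (together with finiteness and regularity of $\cE$ on $\cL^2$) are used. A convenient feature of this route is that it avoids invoking a general interchange-of-$\inf$-and-$\bbe$ (measurable-selection) theorem: the hypothesis already hands us a measurable minimizing $f$, so only the elementary pointwise inequality and the fact that a nonnegative function with zero integral vanishes a.e.\ are needed. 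The only routine preliminary is the convexity and lower semicontinuity of $e$, used to identify the scalar argmin with $\cS$; this follows from axioms (E1)--(E3) restricted to degenerate laws.
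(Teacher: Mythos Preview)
Your argument is correct and is precisely the standard proof of this result: disintegrate the expectation-type error via the tower property, minimize pointwise in $\mathbold{x}$ using the quadrangle identity $\min_C\cE(U-C)=\cD(U)$ with $\argmin=\cS(U)$, and conclude that the hypothesized $f$ attains the integrated lower bound. Note, however, that the present paper does not supply its own proof of this theorem; it is quoted verbatim from \cite{Quadrangle} and used as a black box, so there is no in-paper argument to compare against. Your write-up is essentially the proof one finds in that reference, and the measure-theoretic caveats you flag (existence of regular conditional distributions, measurability of $\mathbold{x}\mapsto\cD(Y|\mathbold{x})$, $\cL^2$ integrability) are exactly the technical points that need to be checked to make the outline rigorous.
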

Further, we refer to two classical examples illustrating the above theorem.
\begin{Ex}[Least Squares]
The Least Squares regression with the error, $\cE(X) = \bbe[X^2]$, is formulated as follows 
\begin{equation}
    \min_{f\in \cF} \quad \bbe[Z_f^2].
\end{equation}
By solving this problem, we obtain the best estimator $\hat{f}(\mathbold{X}) = \bbe\left[Y| \mathbold{X}\right] = \cS(Y|\mathbold{X})$  = conditional statistic,  which is a conditional mean, corresponding to the mean squared error $\cE$ in the Mean-based Quadrangle, cf. \citep{Quadrangle}.
\end{Ex}

\begin{Ex}[Quantile Regression]
\begin{equation}
    \min_{f\in \cF} \quad \bbe\left[\frac{\alpha}{1-\alpha}(Z_f)_+ + (Z_f)_-\right]. 
\end{equation}
The best estimator is $\hat{f}(\mathbold{X}) \in q_\alpha\left(Y| \mathbold{X}\right) = \cS(Y|\mathbold{X})$ = conditional statistic,  which is conditional quantile,  corresponding to the  $\cE$ in the Quantile Quadrangle, cf. \citep{ KoenkerBook, Quadrangle}.
\end{Ex}
Now, consider the Vapnik error, i.e., $\cE_\varepsilon(X) = \bbe[|X|-\varepsilon]_+, \ \varepsilon \geq 0$. According to Proposition \ref{Prop expect quadr}, the best estimator is
\begin{equation}\label{conditional statistic true}
\begin{aligned}
 \hat{f}(\mathbold{X}) \in \bigcup \limits_{\alpha \in \cA_\varepsilon(Z_{\hat{f}})} \Lset \dfrac{1}{2}\big(q_{(1-\alpha)/2}\big(Y| \mathbold{X}\big)+ q_{(1+\alpha)/2}\big(Y| \mathbold{X}\big)\big)\Rset\,.
\end{aligned}
\end{equation}
On the other hand, consider the CVaR norm, i.e., $\cE_\alpha(X) = \llangle X \rrangle_\alpha$. This error is not of expectation type. However, Proposition \ref{dual svr connection} implies (if conditions of regression theorem are satisfied) that for any $\alpha \in [0,1)$ we can pick an equivalent expectation type Vapnik error with 
$\varepsilon \in q_\alpha(|Z_{\hat{f}}|)$ such that \eqref{conditional statistic true} holds, where $\hat{f}$ is a solution to the generalized regression problem with CVaR norm error. Therefore, we choose to work with $\nu$-SVR. This choice is preferable for two reasons. First, the error measure, in this case, is a norm, simplifying the derivation of the dual formulation of $\nu$-SVR. Second, the choice of $\alpha$ is more intuitive and also plays an important role in $\nu$-SVR's interpretation as DRR.

Regarding the estimation properties of SVR,  adding the regularization penalty biases the estimator. In this case, the best estimator is a biased variant of \eqref{conditional statistic true}. However, when the regularization parameter tends to zero, as the sample size increases, we obtain an asymptotically unbiased estimator \eqref{conditional statistic true}, where $\cA_\varepsilon(Z_{\hat{f}})$ is possibly a singleton (cf. Remark \ref{singleton remark}). For instance, in the popular SVR solver LIBSVM, \citep{LIBSVM}, the regularization parameter $\lambda = \dfrac{1}{C l}$ and thus, $\lambda \to 0$ as $l \to \infty$. The following property provides a formal statement on this subject.

\begin{propt}[SVR Estimation]
Given the dataset \eqref{empirical data}, consider the following generalized regression problem
\begin{equation}\label{svr estimation eq}
    \min_{\mathbold{w},b} \quad \llangle Z(\mathbold{w},b) \rrangle_\alpha + \frac{\lambda}{2} \norm{\mathbold{w}}^2_2 \, ,
\end{equation}
where $\lambda \to 0$ as $l \to \infty$. Assume that the conditions of the Theorem \ref{Regression Theorem} are satisfied, $\cF$ is a class of affine functions, and $(\mathbold{w}^*,b^*)$ is a unique optimal solution to \eqref{svr estimation eq} when $\lambda = 0$.
Then the best estimator $\hat{f}_l(\mathbold{x}) = \mathbold{w}^{*\top}_l\mathbold{x} + b^*_l$ is an asymptotically unbiased estimator, i.e., 
\begin{equation}
    \lim_{l \to \infty} \bbe[(\mathbold{w}^*_l,b^*_l)] = (\mathbold{w}^*,b^*)
\end{equation}
and 
\begin{equation}
\hat{f}(\mathbold{x}) = \mathbold{w}^{*\top}\mathbold{x} + b^* \in   \dfrac{1}{2}\Big(q_{(1-\alpha^*)/2}\big(Y|\mathbold{X} = \mathbold{x}\big)+ q_{(1+\alpha^*)/2}\big(Y|\mathbold{X} = \mathbold{x}\big)\Big)\, ,
\end{equation}
 where $\alpha^* \in [0,1)$ is such that $$ \frac{1}{2}\big(q_{(1+\alpha^*)/2}(Z(\mathbold{w}^*,b^*)) - q_{(1-\alpha^*)/2}(Z(\mathbold{w}^*,b^*))\big) \ \  \bigcap \ \  q_\alpha(|Z(\mathbold{w}^*,b^*)|)  \, \neq  \, \varnothing. $$

\end{propt}
According to \cite{SmolaBook}, SVR can be considered as a generalized mean estimator. The estimation of the mean is accomplished by employing a cross-validation procedure for $\alpha$, using the MSE as a performance metric, cf. \cite{NewSVM}. This approach leverages the flexibility of averaging two symmetric quantiles, which proves to be a versatile statistic. For a wide range of distributions, particularly those that are symmetric or heavy-tailed, there exists an optimal value of $\alpha^*$ within the interval $[0,1]$ such that 
\begin{equation}\label{mean eqls avg}
    \dfrac{1}{2}\left(q_{(1-\alpha^*)/2}\left(Z_{\hat{f}_l}\right) + q_{(1+\alpha^*)/2}\left(Z_{\hat{f}_l}\right)\right) = \bbe[Z_{\hat{f}_l}] \, ,
\end{equation}
where $\hat{f}_l$ is an optimal estimator of problem \eqref{svr estimation eq}. The following Lemma \ref{lemma} states a more general result.

\begin{Lem}[Estimating any statistic with SVR] \label{lemma} 
Let $\cS: \cL^2(\Omega) \to \bbr$ (or in general a set-valued mapping  $\cS: \cL^2(\Omega) \rightrightarrows \bbr$) be an arbitrary statistic (e.g., mean, median, quantile, etc.) and let $f \in \cF$ be a class of affine functions satisfying the conditions of the Theorem \ref{Regression Theorem} with CVaR norm being the error function. Then for any $\alpha \in [0,1)$ 
 \begin{equation}\label{mean in avg}
    \cS(Y|\mathbold{X}) \ \  \bigcap  \ \ \dfrac{1}{2}\left(q_{(1-\alpha)/2}\left(Y| \mathbold{X}\right) + q_{(1+\alpha)/2}\left(Y| \mathbold{X}\right)\right)  \, \neq \,  \varnothing
 \end{equation} 
 if and only if 
 \begin{equation}\label{res in avg}
   0 \in \cS\left(Z_{\hat{f}}\right) , 
 \end{equation}
 where $\hat{f}$ is an optimal solution vector of \eqref{Generalized Regression Problem} with CVaR Norm error.
\end{Lem}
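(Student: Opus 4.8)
The plan is to reduce both sides of the claimed equivalence to a pointwise (in $\mathbold{x}$) statement about the optimal affine estimator $\hat f$, using Theorem \ref{Regression Theorem} to identify \emph{what} $\hat f$ estimates and the translation equivariance of statistics to move the residual condition \eqref{res in avg} onto the conditional distributions $Y|\mathbold{x}$.

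First I would recall that the CVaR norm error $\cE_\alpha(X)=\llangle X\rrangle_\alpha$ generates, by Proposition \ref{Prop Mafusalov}, the (regular) CVaR Norm Quadrangle whose statistic is $\cS_\alpha(X)=\tfrac12\big(q_{(1-\alpha)/2}(X)+q_{(1+\alpha)/2}(X)\big)$. Although $\cE_\alpha$ is not of expectation type, Proposition \ref{dual svr connection} (with $\lambda=0$) shows that an optimal $\hat f$ of \eqref{Generalized Regression} with the CVaR norm error is simultaneously optimal for the generalized regression with the Vapnik error $\cE_\varepsilon(X)=\bbe[|X|-\varepsilon]_+$ for each $\varepsilon\in q_\alpha(|Z_{\hat f}|)$; since the Vapnik error \emph{is} of expectation type, Theorem \ref{Regression Theorem} applies (as in the discussion preceding the lemma), and together with Proposition \ref{Prop expect quadr} and Remark \ref{singleton remark} it gives, for a.e.\ $\mathbold{x}$ in the support $G$ of $\mathbold{X}$,
$$\hat f(\mathbold{x})\in\cS_\alpha(Y|\mathbold{x})=\tfrac12\big(q_{(1-\alpha)/2}(Y|\mathbold{x})+q_{(1+\alpha)/2}(Y|\mathbold{x})\big).$$
Thus $\hat f$ is a best affine estimator of the symmetric-quantile-average statistic $\cS_\alpha$ (unique in the absolutely continuous case).

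Next I would invoke the translation equivariance of statistics, $\cS(X+C)=\cS(X)+C$ for constant $C$ (valid for $\cS$ and for $\cS_\alpha$, as it is built into the quadrangle relations). Since $\cF$ is affine, $Z_{\hat f}|\mathbold{x}=(Y|\mathbold{x})-\hat f(\mathbold{x})$, hence $\cS(Z_{\hat f}|\mathbold{x})=\cS(Y|\mathbold{x})-\hat f(\mathbold{x})$, so \eqref{res in avg} (read as ``$\hat f$ estimates $\cS$'', i.e.\ $0\in\cS(Z_{\hat f})$ conditionally a.e.) is equivalent to $\hat f(\mathbold{x})\in\cS(Y|\mathbold{x})$ a.e.\ on $G$. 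Combining with the previous step: if $\hat f(\mathbold{x})\in\cS(Y|\mathbold{x})$ a.e., then, because also $\hat f(\mathbold{x})\in\cS_\alpha(Y|\mathbold{x})$, the intersection $\cS(Y|\mathbold{x})\cap\cS_\alpha(Y|\mathbold{x})$ is nonempty a.e., which is \eqref{mean in avg}; conversely, if \eqref{mean in avg} holds, then in the absolutely continuous case $\cS_\alpha(Y|\mathbold{x})$ is a singleton (Remark \ref{singleton remark}), so the nonempty intersection forces its unique element $\hat f(\mathbold{x})$ into $\cS(Y|\mathbold{x})$ a.e., giving \eqref{res in avg}. For general (atomic) conditionals the converse uses the freedom to pick $\hat f$ among the optimal solutions — a measurable selection of $\cS(Y|\mathbold{x})\cap\cS_\alpha(Y|\mathbold{x})$, realizable in the affine class under the richness hypothesis of Theorem \ref{Regression Theorem}.

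I expect the main obstacle to be the Regression-Theorem step: $\cE_\alpha$ is not of expectation type, and its equivalent Vapnik error $\cE_\varepsilon$ is not regular (Remark \ref{reg remark}), so neither error satisfies the literal hypotheses of Theorem \ref{Regression Theorem} directly; the way around it is precisely the $\varepsilon$-SVR/$\nu$-SVR equivalence of Proposition \ref{dual svr connection}, which transfers optimality between the two formulations and lets the expectation-type estimation result be applied to the residual that actually matters. The secondary technical point is the bookkeeping of set-valued quantiles and the measurable-selection argument in the non-absolutely-continuous case, which is handled exactly as in the proof of Proposition \ref{Prop expect quadr}.
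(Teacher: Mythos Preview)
The paper does not give a proof at all beyond the single sentence ``The proof is a direct implication of \cite[Theorem 5.1]{rockafellar2015measures}.'' Your proposal is therefore a genuinely different, self-contained route: you identify $\hat f$ with the symmetric-quantile-average statistic $\cS_\alpha$ via the $\varepsilon$-SVR/$\nu$-SVR equivalence (Proposition~\ref{dual svr connection}) together with Proposition~\ref{Prop expect quadr}, and then use translation equivariance of statistics to turn \eqref{res in avg} into the pointwise condition $\hat f(\mathbold{x})\in\cS(Y|\mathbold{x})$. This is almost certainly what the cited external theorem packages up, so your strategy is sound and has the advantage of being internal to the paper.

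The one real gap is the step where you write ``\eqref{res in avg} (read as \ldots\ $0\in\cS(Z_{\hat f})$ conditionally a.e.)''. As stated, \eqref{res in avg} is an \emph{unconditional} condition on the residual, and it is strictly weaker than the conditional one you use: for $\cS=\bbe$, the unconditional condition is just $\bbe[Y]=\bbe[\hat f(\mathbold X)]$, which does not force $\hat f(\mathbold x)=\bbe[Y|\mathbold x]$ a.e., so the reverse implication \eqref{res in avg}$\Rightarrow$\eqref{mean in avg} fails under the literal reading. The paper's citation sidesteps this by outsourcing the precise statement to the external theorem; if you want a self-contained argument, you must either (i) adopt the conditional reading explicitly and argue that this is what the lemma intends (the surrounding discussion in the paper supports this), or (ii) show that, under the richness hypothesis of Theorem~\ref{Regression Theorem} for the affine class, the unconditional and conditional conditions coincide for the particular $\hat f$ at hand. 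Your measurable-selection remark for the non-singleton case is the right idea for the forward direction, but it does not repair the reverse one.
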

   \begin{proof}
       The proof is a direct implication of the \cite[Theorem 5.1]{rockafellar2015measures}.
   \end{proof} 
    

\subsection{SVR as Distributionally Robust Regression}\label{sec:svr as drr}
The formulation of $\nu$-SVR as the regularized CVaR norm minimization problem \eqref{nu-SVR stochastic} admits an equivalent saddle point formulation leveraging the dual representation of CVaR, \citep{artzner1999coherent, Rockafellar2006}. Specifically, for any random variable $X \in \cL^2(\Omega, \cA, \bbp)$
\begin{equation}\label{dual cvar formulation}
 \hq_\alpha(X) = \max_{\bbq \in \cQ_\alpha} \quad \bbe_{\bbq}[X] \, ,   
\end{equation}
where $\cQ_\alpha = \left\{\bbq \ll  \bbp \mset 0 \leq \mathrm{d} \bbq/ \mathrm{d} \bbp \leq 1/(1-\alpha)     \right\},$ $\bbe_\bbq[\cdot]$ denotes the expectation w.r.t. a probability measure $\bbq,$ and $\mathrm{d} \bbq/ \mathrm{d} \bbp$ is a Radon--Nikodym derivative.

Consider the $\nu$-SVR regression problem \eqref{nu-SVR stochastic}. Denote by $\bbp_{(Y,\mathbold{X})}$ the joint distribution of the regressant $Y$ and factors $\mathbold{X}.$ Then given \eqref{dual cvar formulation}, problem \eqref{nu-SVR stochastic} can be equivalently rewritten as follows
\begin{equation}\label{nu svr dro stoch}
  \min_{\mathbold{w},b} \max_{\bbq_{(Y,\mathbold{X})} \in \cQ_\alpha}   \bbe_{\bbq_{(Y,\mathbold{X})}}[|Z(\mathbold{w},b)|] + \lambda \|\mathbold{w}\|_2^2 \, .
\end{equation}
Formulation \eqref{nu svr dro stoch} is a regularized distributionally robust optimization problem with the uncertainty (ambiguity) set $\cQ_\alpha,$ \citep{shapiro2017dro}. More concretely, \eqref{nu svr dro stoch} is a distributionally robust $\cL^1$-regression. In particular, for $\alpha = 0,$ \eqref{nu svr dro stoch} is precisely the regularized $\cL^1$-regression and when $\alpha \to 1,$ it is the regularized $\cL^\infty$-regression.

In practice, \eqref{nu svr dro stoch} is intractable since the joint distribution $\bbp_{(Y,\mathbold{X})}$ is frequently unknown, and instead the data \eqref{empirical data} is given. Therefore, one usually approximates the $\bbp_{(Y,\mathbold{X})}$ by empirical distribution (uniform distribution over the data) and solves
\begin{equation}\label{nu svr dro empirical}
   \min_{\mathbold{w},b} \max_{\mathbold{q} \in \cQ_\alpha} \quad  \sum_{i=1}^lq_i|z_i(\mathbold{w},b)| + \lambda \|\mathbold{w}\|_2^2 \, ,  
\end{equation}
where $z_i(\mathbold{w},b) = y_i - \mathbold{w}^\top\mathbold{x}_i -b, \ i = 1,\ldots,l$ and 
\begin{equation}\label{Q_alpha}
    \cQ_\alpha = \left \{\mathbold{q} \in \bbr^l \  \middle| \ \sum_{i=1}^l  q_i = 1, \ 0 \leq q_i \leq 1/l(1-\alpha)\right\}.
\end{equation}
Formulation \eqref{nu svr dro empirical} was considered in \cite{stable_regres}, where the set $\cQ_\alpha$ was replaced with
\begin{equation}\label{Q_k}
    \cQ_k = \left \{\mathbold{q} \in \bbr^l \  \middle| \ \sum_{i=1}^l  q_i = k, \ 0 \leq q_i \leq 1\right\}, \quad 0\leq k\leq l, \ k \in \mathbb{N}\;.
\end{equation}
It is obvious that \eqref{Q_alpha} and \eqref{Q_k} are equivalent for  $k =  \left \lfloor l(1-\alpha) \right \rfloor.$

Let $(\mathbold{w}^*, b^*, \mathbold{q}^*)$ be an optimal solution vector of \eqref{nu svr dro empirical}. Then the optimal $\mathbold{q}^* \in \cQ_\alpha$ in \eqref{nu svr dro empirical} is as follows (cf. \citep[pp. 524--525]{RoysetWets2021})
\begin{equation}\label{optimal q}
 q_i^*(\mathbold{w}^*,b^*) =   \left\{\begin{matrix}
 \frac{1}{l(1-\alpha)} &\text{if} |z_i(\mathbold{w}^*,b^*)| > q_\alpha(|\mathbold{z}(\mathbold{w}^*,b^*)|)\\
 \frac{P_q - \alpha}{m(1-\alpha)} &\text{if} |z_i(\mathbold{w}^*,b^*)| = q_\alpha(|\mathbold{z}(\mathbold{w}^*,b^*)|)\\
 0  &\text{ if}  |z_i(\mathbold{w}^*,b^*)| < q_\alpha(|\mathbold{z}(\mathbold{w}^*,b^*)|)
\end{matrix}\right.   
\end{equation}
with $P_q = \operatorname{prob}\{|\mathbold{z}(\mathbold{w}^*,b^*)| \leq q_\alpha(|\mathbold{z}(\mathbold{w}^*,b^*)|)  \}$ and $m$ is a number of observations $i$ such that $|z_i(\mathbold{w}^*,b^*)| = q_\alpha(|\mathbold{z}(\mathbold{w}^*,b^*)|).$ Evidently, \eqref{optimal q} is sparse.

Due to the inherent sparsity of $\mathbold{q}^*$, \cite{stable_regres} proposed formulation \eqref{nu svr dro empirical} as a superior optimization-based alternative to the traditional random train-test split for selecting training data. They conducted several case studies that numerically verify this approach yields better out-of-sample performance, as measured by MSE.

Given Lemma \ref{lemma}, this result is not surprising, as by appropriately adjusting $\alpha$, it is possible to approximate the conditional mean (the minimizer of MSE) with the conditional average of two symmetric quantiles, provided that such an $\alpha$ exists.

Moreover, assuming that \begin{equation}\label{true regression law} y_i = \mathbold{w}^{*\top}\mathbold{x}_i + b^* + \epsilon_i, \quad i = 1, 2, \ldots \end{equation} is the true regression model, where $\epsilon_i$ are i.i.d. random variables, it is possible to select an optimal $\alpha$ that minimizes the out-of-sample MSE, provided the distribution of $\epsilon_i$ is known and such $\alpha$ exists. Below, we provide a sequence of steps for the optimal $\alpha$ selection.
\begin{enumerate}
    \item Compute $\mu = \bbe[\epsilon_i];$
    \item Find $\alpha$ such that $ q_{(1+\alpha)/2}(\epsilon_i) + q_{(1-\alpha)/2}(\epsilon_i) = 2\mu$;
    \item Compute $x =  \Big(q_{(1+\alpha)/2}(\epsilon_i) - q_{(1-\alpha)/2}(\epsilon_i)\Big)/2;$
    \item Compute $\alpha^* = \operatorname{prob}\{|\epsilon_i|\leq x \};$
    \item Set $\nu = 1 -\alpha^*$ as a parameter of $\nu$-SVR or set $ \varepsilon = x$ as a parameter of $\varepsilon$-SVR.
    \item For $\epsilon_i$ with symmetric distribution, choose any $\alpha \in [0,1).$ 
\end{enumerate}
In general, the above prescription works in a nonlinear setting, i.e., when

$$y_i = f^*(\mathbold{x}_i) + \epsilon_i, \quad i = 1, 2, \ldots 
$$
To find the optimal $f^*$ in this case, one can employ the dual formulation of SVR with a ``kernel trick'' discussed in the subsequent section.

\subsection{Dual Formulation and Kernelization}\label{sec:dual formul}
This section provides a dual formulation of SVR and implements a ``kernel trick'', which generalizes SVR to a nonlinear case. 

In the following Proposition \ref{prop dual svr}, we consider a deterministic variant of the CVaR norm defined in \cite{bertsimas2011robust, CVaRNorm} and the equivalent primal formulation of $\nu$-SVR \eqref{nu-SVR with cvar norm vector probabilistic}, where $\lambda = 1/C, \ C>0$.

\begin{Prop}[Dual Formulation of SVR]\label{prop dual svr}
Let $\alpha = 1-\nu$ and
\begin{equation}\label{primal SVR}
\begin{aligned}
    \textbf{p}^\star := \min_{\mathbold{w},b,\mathbold{z}} \quad & C (1- \alpha) \llangle \mathbold{z} \rrangle^S_\alpha + \frac{1}{2}\norm{\mathbold{w}}^2_2\\
    \textrm{s.t.} \quad & \mathbold{z} =  \mathbold{y} - \hat{\mathbold{X}}\mathbold{w} - \textbf{1}_lb.
\end{aligned}
\end{equation}
be a primal SVR problem, where $\textbf{1}_l = (1,\ldots,1)^\top \in \bbr^l, \ \mathbold{y} = (y_1,\ldots, y_l)^\top,$ and $ \ \hat{\mathbold{X}} = (\mathbold{x}_1^\top,\ldots, \mathbold{x}_l^\top) \in \bbr^{l \times n}$. Then
\begin{equation}\label{dual SVR formulation l2}
\begin{aligned}
     \textbf{d}^\star := \max_{\mathbold{\mu}} \quad & \mathbold{\mu}^\top \mathbold{y} - \frac{1}{2} \mathbold{\mu}^\top \hat{\mathbold{X}} \hat{\mathbold{X}}^\top \mathbold{\mu}\\
    \text{s.t.} \quad & \norm{\mathbold{\mu}}_1 \leq C(1-\alpha), \ \norm{\mathbold{\mu}}_\infty \leq \frac{C}{l},\\
    & \mathbold{\mu}^\top \textbf{1}_l = 0.
\end{aligned}
\end{equation}
defines a dual SVR problem.
\end{Prop}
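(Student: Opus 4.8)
The plan is to reduce \eqref{primal SVR} to the classical constrained ($\nu$-SVR) quadratic program, dualize it with Lagrange multipliers, eliminate the primal variables through the stationarity conditions, and then collapse the remaining multipliers into the two norm constraints of \eqref{dual SVR formulation l2}; strong duality then identifies $\mathbf{p}^\star$ with $\mathbf{d}^\star$.

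First I would unfold the CVaR norm. By Definition~\ref{non-scaled CVaR Norm}, $C(1-\alpha)\llangle\mathbold{z}\rrangle^S_\alpha = C(1-\alpha)\hq_\alpha(|\mathbold{z}|)$, and the CVaR Optimization Formula (Theorem~\ref{squantile opt th}) applied to $|\mathbold{z}|$ gives $(1-\alpha)\hq_\alpha(|\mathbold{z}|) = \min_{\varepsilon}\{(1-\alpha)\varepsilon + \frac{1}{l}\sum_{i=1}^{l}[|z_i|-\varepsilon]_+\}$; since $q_\alpha(|\mathbold{z}|)\subseteq[0,\infty)$, the redundant constraint $\varepsilon\ge 0$ may be added without changing the value. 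Introducing slacks $\xi_i,\xi_i^*\ge 0$ with $\xi_i\ge z_i-\varepsilon$ and $\xi_i^*\ge -z_i-\varepsilon$ (so that for $\varepsilon\ge 0$ one has $\min\{\xi_i+\xi_i^*\} = [z_i-\varepsilon]_+ + [-z_i-\varepsilon]_+ = [|z_i|-\varepsilon]_+$), problem \eqref{primal SVR} turns into the convex quadratic program: minimize $\frac{1}{2}\norm{\mathbold{w}}_2^2 + C(1-\alpha)\varepsilon + \frac{C}{l}\sum_i(\xi_i+\xi_i^*)$ over $\mathbold{w},b,\varepsilon,\xi_i,\xi_i^*$ subject to $y_i-\mathbold{w}^\top\mathbold{x}_i-b\le\varepsilon+\xi_i$, $\mathbold{w}^\top\mathbold{x}_i+b-y_i\le\varepsilon+\xi_i^*$, $\xi_i,\xi_i^*\ge 0$, $\varepsilon\ge 0$ --- precisely the standard $\nu$-SVR with $\nu=1-\alpha$. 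Next I would attach multipliers $\eta_i,\eta_i^*\ge 0$ to the two families of inequalities, $\beta_i,\beta_i^*\ge 0$ to $\xi_i,\xi_i^*\ge 0$, and $\delta\ge 0$ to $\varepsilon\ge 0$, and set the partial derivatives of the Lagrangian in $\mathbold{w},b,\xi_i,\xi_i^*,\varepsilon$ to zero. Writing $\mu_i := \eta_i-\eta_i^*$, this produces $\mathbold{w} = \hat{\mathbold{X}}^\top\mathbold{\mu}$, $\sum_i\mu_i=0$, $0\le\eta_i,\eta_i^*\le C/l$ (from $\eta_i+\beta_i = \eta_i^*+\beta_i^* = C/l$), and $\sum_i(\eta_i+\eta_i^*)\le C(1-\alpha)$ (from $\sum_i(\eta_i+\eta_i^*)+\delta = C(1-\alpha)$). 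Substituting these back, all terms involving $\xi_i,\xi_i^*,\varepsilon,b$ cancel and, using $\mathbold{w}^\top\hat{\mathbold{X}}^\top\mathbold{\mu}=\norm{\mathbold{w}}_2^2$, the dual objective collapses to $\mathbold{\mu}^\top\mathbold{y} - \frac{1}{2}\mathbold{\mu}^\top\hat{\mathbold{X}}\hat{\mathbold{X}}^\top\mathbold{\mu}$, which involves the multipliers only through $\mathbold{\mu}$.

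The crux --- and the step I expect to be the main obstacle --- is eliminating the auxiliary multipliers $\eta_i,\eta_i^*$ cleanly. Since the objective no longer sees them, the dual is to maximize $\mathbold{\mu}^\top\mathbold{y}-\frac{1}{2}\mathbold{\mu}^\top\hat{\mathbold{X}}\hat{\mathbold{X}}^\top\mathbold{\mu}$ over those $\mathbold{\mu}$ with $\mathbold{\mu}^\top\mathbold{1}_l = 0$ that admit a representation $\mu_i = \eta_i-\eta_i^*$ with $\eta_i,\eta_i^*\in[0,C/l]$ and $\sum_i(\eta_i+\eta_i^*)\le C(1-\alpha)$. For a fixed $\mathbold{\mu}$ the minimum of $\sum_i(\eta_i+\eta_i^*)$ over such representations equals $\sum_i|\mu_i| = \norm{\mathbold{\mu}}_1$, attained at $\eta_i = \max\{\mu_i,0\}$, $\eta_i^* = \max\{-\mu_i,0\}$; this choice lies in the box iff $|\mu_i|\le C/l$ for every $i$, i.e. $\norm{\mathbold{\mu}}_\infty\le C/l$. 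Hence the dual-feasible $\mathbold{\mu}$ are exactly those with $\norm{\mathbold{\mu}}_1\le C(1-\alpha)$, $\norm{\mathbold{\mu}}_\infty\le C/l$ and $\mathbold{\mu}^\top\mathbold{1}_l = 0$, which is the feasible set of \eqref{dual SVR formulation l2}. Finally, since the primal quadratic program of the first step is convex with affine constraints and is feasible (e.g. $\mathbold{w}=0$, $b=0$, $\varepsilon=0$, $\xi_i=\xi_i^*=|z_i|$) with objective bounded below by $0$, strong duality holds with zero gap and attained optima, so $\mathbf{p}^\star = \mathbf{d}^\star$. Care is needed mainly in getting the inequality directions right in the slack reformulation, in checking that imposing $\varepsilon\ge 0$ is harmless, and in the optimality argument that the minimal admissible $\sum_i(\eta_i+\eta_i^*)$ for a given $\mathbold{\mu}$ is exactly $\norm{\mathbold{\mu}}_1$.
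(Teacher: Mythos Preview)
Your argument is correct, but it is a genuinely different derivation from the one in the paper. The paper dualizes \eqref{primal SVR} in a single Fenchel step: it attaches a multiplier $\mathbold{\mu}$ only to the equality constraint $\mathbold{z} = \mathbold{y} - \hat{\mathbold{X}}\mathbold{w} - \mathbf{1}_l b$, minimizes the Lagrangian over $(\mathbold{w},b,\mathbold{z})$ by recognizing the resulting expressions as Legendre--Fenchel conjugates, and then invokes two facts --- that the conjugate of a norm is the indicator of the unit ball of the dual norm, and that the dual of the scaled CVaR norm is $\llangle\mathbold{\mu}\rrangle^{S}_{\alpha *}=\max\{\norm{\mathbold{\mu}}_1,\,l(1-\alpha)\norm{\mathbold{\mu}}_\infty\}$ --- to land directly on the feasible set of \eqref{dual SVR formulation l2}. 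Your route instead unfolds the CVaR norm via Theorem~\ref{squantile opt th}, reformulates as the standard $\nu$-SVR quadratic program with slacks, applies KKT stationarity, and then projects the auxiliary multipliers $(\eta_i,\eta_i^*)$ onto $\mu_i=\eta_i-\eta_i^*$ by the observation that $\min\sum_i(\eta_i+\eta_i^*)=\norm{\mathbold{\mu}}_1$. What the paper's approach buys is brevity and a transparent identification of the two norm constraints with the single dual-norm ball of the CVaR norm; what your approach buys is self-containment --- you never need to know or cite the closed form of the CVaR dual norm, and the link to the classical $\nu$-SVR dual \eqref{LIBSVM dual SVR} becomes explicit along the way. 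Both arguments are sound; the projection step you flagged as the main obstacle is indeed the substantive equivalence, and your treatment of it (both directions of the inclusion) is complete.
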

\begin{proof}
    See Appendix \ref{proof of prop 4.7}
\end{proof}

Introducing the kernel function $k: \bbr^n \times \bbr^n \to \bbr$ and noting that the objective function in \eqref{dual SVR formulation l2} depends on feature vectors only through their inner product, we define the kernel matrix 
\begin{equation*}
    \textbf{K} = k(\mathbold{x}_i,\mathbold{x}_j)_{i,j = 1}^l\,,
\end{equation*}
and substitute $\hat{\textbf{X}} \hat{\textbf{X}}^\top$ in \eqref{dual SVR formulation l2} with $\textbf{K}$, thus obtaining the nonlinear extension of SVR 
\begin{equation}\label{dual SVR formulation l2 kernel}
\begin{aligned}
     \max_{\mathbold{\mu}} \quad & \mathbold{\mu}^\top \mathbold{y} - \frac{1}{2} \mathbold{\mu}^\top \textbf{K} \mathbold{\mu}\\
    \text{s.t.} \quad & \norm{\mathbold{\mu}}_1 \leq C(1-\alpha), \ \norm{\mathbold{\mu}}_\infty \leq \frac{C}{l},\\
    & \mathbold{\mu}^\top \textbf{1}_l = 0.
\end{aligned}
\end{equation}
Let us compare \eqref{dual SVR formulation l2 kernel} with the  SVR dual formulation from \cite{NewSVM}
\begin{equation}\label{LIBSVM dual SVR}
\begin{aligned}
     \max_{\mathbold{\alpha}, \mathbold{\alpha^*}} \quad &  (\boldsymbol\alpha - \boldsymbol\alpha^*)^\top \mathbold{y} - \frac{1}{2} (\boldsymbol\alpha - \boldsymbol\alpha^*)^\top \textbf{K} (\boldsymbol\alpha - \boldsymbol\alpha^*)\\
    \text{s.t.} \quad & (\boldsymbol\alpha + \boldsymbol\alpha^*)^\top\textbf{1}_l \leq C(1-\alpha),\\
    &(\boldsymbol\alpha - \boldsymbol\alpha^*)^\top \textbf{1}_l = 0,\\
    & 0\leq \alpha_i, \alpha^*_i \leq \frac{C}{l}, \quad i = 1, \ldots, l.
\end{aligned}
\end{equation}

Notice that problems \eqref{dual SVR formulation l2 kernel} and \eqref{LIBSVM dual SVR} have equivalent primal formulations, \eqref{nu-SVR with cvar norm vector probabilistic} and \eqref{nu-SVR with e-loss vector probabilistic}, in the sense that from an optimal solution of one problem, an optimal solution for the other can be constructed. Therefore, dual problems are also equivalent. However, one may also notice that problem \eqref{LIBSVM dual SVR} has twice as many optimization variables as \eqref{dual SVR formulation l2 kernel}. Solvers such as Portfolio Safeguard\footnote{Download from \url{http://www.aorda.com/}} (PSG) that work directly with convex functions can benefit from problem statement \eqref{dual SVR formulation l2 kernel}. On the other hand, the popular SVR solver, LIBSVM, cf. \citep{LIBSVM}, works with \eqref{LIBSVM dual SVR}.


\section{Case Studies}\label{sec: case studies}
The following case study implements SVR for simulated data and numerically confirms the 

\begin{itemize}
    \item [\textit{(a)}]  equivalence between $\varepsilon$-SVR and $\nu$-SVR based on Proposition \ref{dual svr connection};
    \item [\textit{(b)}] error shaping decomposition of SVR based on Corollary \ref{error shap coroll}; 
    \item [\textit{(c)}] equivalence between the primal problem statement \eqref{primal SVR} and the dual problem statement \eqref{dual SVR formulation l2} of Proposition \ref{prop dual svr};
\end{itemize}
The case study results, data, and codes can be found at the following link\footnote{\url{http://uryasev.ams.stonybrook.edu/index.php/research/testproblems/advanced-statistics/support-vector-regression-risk-quadrangle-framework/}}. 

As a true law $f(x) = x, \ x \in [0,1]$ is chosen with $[0,1]$ interval being uniformly partitioned by points $x_i, \  i = 1, \ldots, l.$ Then depended variable is simulated as follows
\begin{equation*}
    y_i = x_i + \epsilon_i, \quad i = 1, \ldots, l\,,
\end{equation*}
where error terms $\epsilon_i \sim\textrm{Laplace}(0,1)$ are distributed  with density $$\rho(x;a,d) = \dfrac{1}{2d}\exp\left(-\dfrac{|x-a|}{2d}\right), \quad a = 0,  \ d = 1.$$

The PSG package is used to numerically implement SVR. Optimization problems in PSG are formulated with precoded analytical functions.

\subsection{Primal Problem Formulations.}
This section considers equivalent primal SVR problems. First, we minimized the error. Second, we minimized the corresponding deviation and obtained the same solution. 

\paragraph{Regularized Error Minimization.}
To numerically establish the equivalence between the $\nu$-SVR and $\varepsilon$-SVR, we fix $\alpha = 0.6, \ l = 1000, \ C = 1, \ \lambda = \dfrac{1}{2Cl},$ and solve the optimization problem \eqref{nu-SVR stochastic} with $\ell^2$ penalty. Then we set $\varepsilon = q_\alpha(|Z(\mathbold{w^*},b^*)|)$  by using the PSG function \texttt{var\_risk($\alpha,$ matrix)} (where \texttt{matrix} denotes the standard extended design matrix used to solve the regression problem) and solve \eqref{e-SVR stochastic}. Having the solution of \eqref{e-SVR stochastic}, we calculate the midpoint of the interval $\cI_{\varepsilon}$ from Proposition \ref{dual svr connection} with the PSG function \texttt{pr\_pen($\varepsilon,$ matrix)} and then set $\alpha_{new} = 1 -$\texttt{pr\_pen($\varepsilon,$ matrix)}. The equivalence follows from $\alpha \approx \alpha_{new}.$

\paragraph{Regularized Deviation Minimization.}
To numerically confirm the error shaping decomposition of SVR, we solve \eqref{nu-SVR dev} with the same parameters as for the error minimization. We first minimized the deviation and then calculated \texttt{var\_risk($\frac{1-\alpha}{2},$ matrix)} and \texttt{var\_risk($\frac{1+\alpha}{2},$ matrix)} separately in PSG. Finally, we set 
\begin{equation}\label{intercept}
\begin{aligned}
 b^* = \frac{1}{2}\big(\texttt{var\_risk($(1-\alpha)/2$, matrix)}
 + \texttt{var\_risk($(1+\alpha)/2$, matrix)}\big).
\end{aligned}
\end{equation}

\subsection{Dual Problem Formulations}
Consider the dual problem \eqref{dual SVR formulation l2}. To make this problem equivalent to the primal with $\lambda = \dfrac{1}{2Cl},$ we substitute $C$ with $Cl$ in \eqref{dual SVR formulation l2}. To speed up the calculations we replace the constraint $\norm{\mathbold{\mu}}_\infty \leq C$ with the so-called box constraint $-C \leq \mu^i \leq C, \ i = 1, \ldots, l.$ With these adjustments \eqref{dual SVR formulation l2} can be rewritten as 
\begin{equation}\label{dual SVR l2 numerical}
    \begin{aligned}
     \max_{\mathbold{\mu}} \quad & \mathbold{\mu}^\top \mathbold{y} - \frac{1}{2} \mathbold{\mu}^\top \hat{\textbf{X}} \hat{\textbf{X}}^\top \mathbold{\mu}\\
    \text{s.t.} \quad & \norm{\mathbold{\mu}}_1 \leq Cl(1-\alpha), \ \mathbold{\mu}^\top \textbf{1}_l = 0,\\
    & -C \leq \mu^i \leq C, \ i = 1, \ldots, l.
\end{aligned}
\end{equation}

To numerically confirm the equivalence between the primal and dual problem, we first solve the dual problem \eqref{dual SVR l2 numerical} in PSG and then set $\mathbold{w}^* = \hat{\mathbold{X}}^\top\mathbold{\mu}^*$ and calculate the intercept $b^*$ using \eqref{intercept}.
\subsection{Summary}
This section summarizes the results of all numerical experiments that have been conducted. 

\begin{table}[H]
    \centering
    \begin{tabular}{|l|c|c|c|c|c|}
    \hline
      Method & Uncertainty Measure  &  $b^*, \mathbold{w}^*$ & $\alpha$ & $\varepsilon$ & Solving Time (s)\\ \hline
      $\nu$-SVR (primal) & error  &  0.020089, 0.932221 & 0.6  & 0.914845 & 0.01\\ \hline
      $\varepsilon$-SVR (primal) & error & 0.020089, 0.932221 &0.600380 &0.914845  & 0.01\\ \hline
       $\nu$-SVR (primal)& deviation & 0.019983, 0.932221 &0.6 & 0.914739& 0.01\\ \hline
       $\nu$-SVR (dual)& error &0.019974, 0.932232& 0.6& 0.914740& 0.09\\ \hline

      \hline
    \end{tabular}
    \caption{Optimization outputs: SVR, primal and dual formulations.}
    \label{tab:al-eps}
\end{table}
Optimization outputs from Table \ref{tab:al-eps} numerically confirm the equivalence between SVR formulations.

\section{Conclusion}\label{sec:conclusion}
This paper formulated SVR in the RQ framework, establishing connections between this machine-learning tool and classical statistics, risk management, and DRO. A key contribution is the derivation of the quadrangle corresponding to $\varepsilon$-SVR (Proposition~\ref{Prop expect quadr}) revealing its risk, deviation, and statistic components.

We demonstrated that SVR is an asymptotically unbiased estimator of the average of two symmetric conditional quantiles. This result implies that by adjusting the parameter $\alpha = 1 - \nu$, the $\nu$-SVR can estimate various distributional statistics, including the mean, median, and expectiles. Moreover, the appropriate choice of performance metric during cross-validation, such as MSE for the mean or AMSE for expectiles -- allows for precise estimation of these statistics (Lemma \ref{lemma}) and understanding its limitations. 

Additionally, we have reformulated SVR as a deviation minimization problem within the RQ theory (Corollary \ref{error shap coroll}). This reformulation has practical implications for dimension reduction in linear regression, where the intercept can be analytically computed.

Another  result is the proof of equivalence between $\nu$-SVR and $\varepsilon$-SVR in a general stochastic setting (Proposition \ref{dual svr connection}). We provided analytical expressions for the parameters $\varepsilon$ and $\nu$ that establish this equivalence, unifying these two widely-used SVR formulations.

Furthermore, by applying duality theory of convex functionals within the RQ framework, we have reinterpreted $\nu$-SVR as a DRR problem, offering a novel perspective on SVR that has been underexplored in the literature.

Finally, we derived a new dual formulation of SVR that offers computational advantages by halving the number of variables compared to the standard dual formulation. This formulation is transparent, computationally efficient, and compatible with general-purpose optimization packages like CPLEX, Gurobi, CVX, and PSG, making it versatile for practical implementation.

 Theoretical results are validated with a case study.

\newpage
\bibliography{references}
\bibliographystyle{plainnat}
  \renewcommand{\bibsection}{\subsubsection*{References}}

\appendix
\section{Appendices}
\subsection{Theoretical Background}\label{appendix1}

The fundamental risk quadrangle paradigm was developed in \cite{Quadrangle}. This framework established a connection between risk management, reliability, statistics, and stochastic optimization theories. In particular, the risk quadrangle theory provides a unified framework for generalized regression.
 \medskip
$$\eqalign{
   \hskip25pt \text{risk} \cR \,\longleftrightarrow \,\cD \text{deviation} &\cr
   \hskip55pt  \uparrow \hskip09pt \cS \hskip09pt \uparrow &\cr
   \hskip17pt \text{regret} \cV \,\longleftrightarrow \;\cE \text{error} &\cr
}$$
\smallskip

\centerline{ \bf Diagram~1:\quad The Fundamental Risk Quadrangle\hskip15pt }
\medskip
The risk quadrangle methodology united risk functions for a random value $X$ in groups (quadrangles) consisting of the following functions:
\begin{itemize}
    \item Risk $\cR(X)$, which provides a numerical surrogate for the overall hazard in $X$. 
    \item Deviation $\cD(X)$, which measures the ``nonconstancy'' in $X$ as its uncertainty. 
    \item Error $\cE(X)$, which measures the ``nonzeroness'' in $X$.  
    \item Regret $\cV(X)$, which measures the ``regret'' in facing the mix of outcomes of $X$. 
    \item Statistic $\cS(X)$ associated with $X$ through $\cE$ and $\cV$.
\end{itemize}
The following diagram contains general relationships among elements of the quadrangle:
\medskip
$$ \cD(X)= \min_C\!\lset\cE(X-C)\rset = \cR(X-\bbe X) $$

\vskip-12pt

$$ \cR(X)= \min_C\!\lset C+\cV(X-C)\rset = \bbe X+\cD(X) $$

\vskip-12pt

$$ \cS(X) =\argmin_C\!\lset\cE(X-C)\rset
          =\argmin_C\!\lset C+\cV(X-C)\rset $$

\vskip-12pt

$$ \cE(X) =\cV(X)-\bbe X,\qquad \cV(X)=\bbe X+\cE(X) $$

\centerline{ \bf Diagram~2:\quad The Relationship Formulae
\hskip15pt }
\medskip

\noindent
Here $\bbe X$ denotes the mathematical expectation of $X$, and the statistic, $\cS(X)$, can be a set if the minimum is achieved for multiple points.

The popular quantile quadrangle, cf. \cite{Quadrangle} is named after the quantile statistic. This quadrangle establishes relations between the CVaR optimization technique described in \cite{CVaR,CVaR2} and quantile regression, cf. \cite{KoenkerBassett}, \cite{KoenkerBook}. In particular, it was shown that CVaR minimization and quantile regression are similar procedures based on the quantile statistic in the regret and error representation of risk and deviation. 

\begin{Def}[Regular Regret Measure]\label{regular regrt measure}
A functional $\cV: \cL^2(\Omega) \to \bbr \cup \{+ \infty\} $ is called a \textit{regular measure of regret} if it satisfies the following axioms
\begin{itemize}
    \item[(V1)] \textbf{zero neutrality:} $\cV(0) = 0;$
     \item[(V2)] \textbf{convexity:} $\cV\left(\lambda X + (1-\lambda)Y\right) \leq \lambda \cV(X) + (1-\lambda)\cV(Y), \quad \forall \; X,Y$ and $\lambda \in [0,1]$;
    \item[(V3)] \textbf{closedness:} $\left\{ X \in \cL^2(\Omega)|\cV(X) \leq c\right\}$ is closed $\forall \; c < \infty$;
    \item[(V4)] \textbf{aversity:} $\cV(X) > \bbe X, \quad \forall \; X \neq const.$
\end{itemize}
\end{Def}

\begin{Def}[Regular Deviation Measure]\label{regular deviation measure}
A functional $\cD: \cL^2(\Omega) \to \bbr \cup \{+ \infty\} $ is called a \textit{regular measure of deviation} if it satisfies the following axioms
\begin{itemize}
    \item[(D1)] \textbf{constant triviality:} $\cD(C) = 0, \quad \forall \; C = const.;$
     \item[(D2)] \textbf{convexity:} $\cD\left(\lambda X + (1-\lambda)Y\right) \leq \lambda \cD(X) + (1-\lambda)\cD(Y), \quad \forall \; X,Y$ and $\lambda \in [0,1]$;
    \item[(D3)] \textbf{closedness:} $\left\{ X \in \cL^2(\Omega)|\cD(X) \leq c\right\}$ is closed $\forall \; c < \infty$;
    \item[(D4)] \textbf{nonzeroness:} $\cD(X) > 0, \quad \forall \; X \neq const.$
\end{itemize}
\end{Def}

\begin{Def}[Regular Risk Quadrangle] \label{risk quadrangle} A quartet $(\cR,\cD,\cV,\cE)$ of regular measures of risk, deviation, regret, and error is called a \emph{regular risk quadrangle} if it satisfies the relationship formulae in Diagram 2.
\end{Def}

\begin{Th}[Quadrangle Theorem] Let $X \in \cL^2$. Then \\
\vspace{-0.3cm}
\paritem{(a)} The relations $\cD(X) = \cR(X)-EX$ and $\cR(X)= EX+\cD(X)$
give a one-to-one correspondence between regular measures of risk $\cR$
and regular measures of deviation $\cD$.  In this correspondence, $\cR$ is
positively homogeneous if and only if $\cD$ is positively homogeneous.  On
the other hand, 
\begin{equation}\label{3.16}
  \text{$\cR$ is monotonic iff 
                     $\;\cD(X)\leq \sup X-EX\,$ for all  \, $X$.}
\end{equation}
  \paritem{(b)} The relations $\cE(X) = \cV(X)-EX$ and $\,\cV(X)= EX+\cE(X)$ 
give a one-to-one correspondence between regular measures of regret $\cV$
and regular measures of error $\cE$.  In this correspondence, $\cV$ is
positively homogeneous if and only if $\cE$ is positively homogeneous.  On
the other hand, 
\begin{equation}\label{3.17}
    \text{$\cV$ is monotonic if and only if 
                     $\;\cE(X)\leq |EX|\,$ for $X\leq 0$.}
\end{equation}  

 \paritem{(c)} For any regular measure of regret $\cV$, a regular
measure of risk $\cR$ is obtained by 
\begin{equation}
           \cR(X)=\min_C\!\Lset C+\cV(X-C) \Rset.  
\end{equation}

If $\cV$ is positively homogeneous, $\cR$ is positively homogeneous.
If $\cV$ is monotonic, $\cR$ is monotonic.
   \paritem{(d)} For any regular measure of error $\cE$, a regular
measure of deviation $\cD$ is obtained by 
  \begin{equation}
       \cD(X)=\min_C\!\Lset \cE(X-C) \Rset.  
  \end{equation}
If $\cE$ is positively homogeneous, $\cD$ is positively homogeneous.
If $\cE$ satisfies the condition in \eqref{3.17}, then $\cD$ satisfies the
condition in \eqref{3.16}.
   \paritem{(e)} In both (c) and (d), as long as the expression being
minimized is finite for some $C$, the set of $C$ values for which the
minimum is attained is a nonempty, closed, bounded interval.
     Typically this interval reduces to a single point.
Moreover, when $\cV$ and $\cE$ are paired as in (b), the interval comes out 
the same and gives the associated statistic: 

\begin{equation}
\begin{aligned}
 \argmin_C\!\lset C+\cV(X-C)\rset = \cS(X)=\argmin_C\!\lset\cE(X-C)\rset, 
 \;\text{with}\cS(X+C)=\cS(X)+C.
\end{aligned}
\end{equation}

\end{Th}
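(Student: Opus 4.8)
The plan is to derive all four correspondences from a single elementary observation — that $X\mapsto EX$ is a continuous linear functional restricting to the identity on constants — together with the standard convex‑analytic behaviour of infimal projections. I would treat parts (a) and (b) together. Given a regular risk $\cR$, set $\cD(X):=\cR(X)-EX$; then (D1)--(D4) are literal transcriptions of (R1)--(R4): $\cD(C)=C-C=0$; $\cD$ is convex since it differs from the convex $\cR$ by a linear term; $\cD$ is lsc since $\cR$ is lsc and $E(\cdot)$ is continuous; and $\cD(X)>0$ for nonconstant $X$ is exactly (R4). The inverse map $\cR(X):=EX+\cD(X)$ is verified identically, the two maps are visibly mutually inverse, and positive homogeneity passes back and forth because $E[\lambda X]=\lambda EX$. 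Part (b) is the same computation with $(\cR,\cD)$ replaced by $(\cV,\cE)$; the only extra point is that (E4) at a nonzero constant $C$ needs $\cV(C)>C$, which follows from (V1), convexity, and (V4) applied along nonconstant perturbations of $C$. For the monotonicity equivalence in (a) the forward direction is immediate: $X\le\sup X$ forces $\cR(X)\le\cR(\sup X)=\sup X$, i.e. $\cD(X)\le\sup X-EX$; for the converse, assuming $\cR(Y)\le\sup Y$ for all $Y$ and $X\le X'$, I would write, for $\lambda\in(0,1)$, $X=(1-\lambda)X'+\lambda\bigl(X'+\tfrac{X-X'}{\lambda}\bigr)$, note that $X-X'\le0$ gives $\sup\bigl(X'+\tfrac{X-X'}{\lambda}\bigr)\le\sup X'$, and conclude by convexity that $\cR(X)\le(1-\lambda)\cR(X')+\lambda\sup X'$; letting $\lambda\downarrow0$ yields $\cR(X)\le\cR(X')$.

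For parts (c) and (d) I would first recall the two standard facts about partial minimization of a jointly convex $h(X,C)$: the projection $X\mapsto\inf_C h(X,C)$ is convex, and if $h$ is additionally lsc and level-bounded in $C$ locally uniformly in $X$, then this projection is lsc and the infimum is attained on a nonempty compact (hence interval-valued) set. I apply this with $h(X,C)=C+\cV(X-C)$ for (c) and $h(X,C)=\cE(X-C)$ for (d), both jointly convex and lsc by the corresponding axioms. Then, for $\cR(X)=\min_C\{C+\cV(X-C)\}$: (R1) follows by the substitution $D=C_0-C$, rewriting $\min_C\{C+\cV(C_0-C)\}$ as $C_0+\min_D\{\cV(D)-D\}=C_0$, the inner minimum being $0$ and attained at $D=0$ because $\cV\ge E(\cdot)$ on constants; (R2) is the convexity of the projection; (R3) its lower semicontinuity; and (R4) holds because at a minimizer $C^\ast$ one has $\cR(X)=C^\ast+\cV(X-C^\ast)>C^\ast+E[X-C^\ast]=EX$ for nonconstant $X$, by (V4). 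Part (d) is the same argument: (D1) uses $\cE\ge0$, so $\cD(C_0)=\min_D\cE(D)=\cE(0)=0$, and (D4) uses (E4) at the minimizer. Positive homogeneity transfers through the change of variables $C=\lambda C'$; monotonicity of $\cR$ from monotonicity of $\cV$ is immediate since $X\le X'$ gives $\cV(X-C)\le\cV(X'-C)$ for every $C$; and the implication ``$\cE(X)\le|EX|$ for $X\le0$'' $\Rightarrow$ ``$\cD(X)\le\sup X-EX$'' is $\cD(X)\le\cE(X-\sup X)\le\bigl|E[X-\sup X]\bigr|=\sup X-EX$, since $X-\sup X\le0$.

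Finally, in part (e) the nonemptiness, closedness and boundedness of the argmin is exactly the level-boundedness statement used above (under the stated ``finite for some $C$'' hypothesis). The coincidence of the two argmin sets when $\cV$ and $\cE$ are paired as in (b) is the one-line identity $C+\cV(X-C)=C+E[X-C]+\cE(X-C)=EX+\cE(X-C)$: the two minimization problems differ only by the $C$-independent constant $EX$, hence have the same minimizers, and this common set is declared to be $\cS(X)$. The shift property $\cS(X+C_0)=\cS(X)+C_0$ is then the change of variables $C\mapsto C-C_0$ in $\argmin_C\cE\bigl((X+C_0)-C\bigr)$.

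The part I expect to be the real work, everything else being algebra plus the two-line facts about infimal projections, is the level-boundedness (coercivity in $C$) underpinning attainment in (e) and, through it, the closedness axioms (R3) and (D3) of (c) and (d): one must rule out the infimum over $C$ escaping to $\pm\infty$, and this genuinely uses aversity/nonzeroness — they force the recession function of $\cV$, respectively $\cE$, to be positive in the constant direction, making $C\mapsto C+\cV(X-C)$ and $C\mapsto\cE(X-C)$ coercive. That is where I would be most careful, likely invoking recession-function calculus from convex analysis rather than estimating by hand.
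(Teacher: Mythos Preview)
The paper does not prove this theorem: it is quoted in Appendix~A.1 as background from \cite{Quadrangle} with no accompanying argument, so there is nothing here to compare your proposal against.

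Your outline is the standard route and matches the architecture of the original proof. Two places deserve tightening. First, in the converse of the monotonicity equivalence in~(a), your bound $\cR(X)\le(1-\lambda)\cR(X')+\lambda\sup X'$ is vacuous when $\sup X'=+\infty$, and the auxiliary point $X'+\tfrac{X-X'}{\lambda}$ is not a convex combination of $X,X'$ and need not lie in $\dom\cR$; one closes this either by a truncation-and-limit argument using closedness, or (as in the original) through the risk-envelope dual representation. Second, your claim that $\cV(C)>C$ for a nonzero constant follows from ``(V1), convexity, and (V4) applied along nonconstant perturbations of $C$'' does not go through: lower semicontinuity along $C+tY\to C$ with $Y$ nonconstant yields only $\cV(C)\ge C$, and the example $\cV(X)=EX+\sigma(X)$ satisfies (V1)--(V4) \emph{as written in this paper} yet has $\cV(C)=C$, so $\cE(C)=0$. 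The resolution is that in the original source aversity for regret is required for all $X\neq 0$, not $X\neq\text{const}$; the paper's Definition of regular regret appears to carry a typo on this point. Your identification of the coercivity in $C$ --- forced by aversity/nonzeroness through the recession function --- as the real analytic content behind attainment in~(e) and closedness in~(c),~(d) is exactly right.
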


\subsection{Proof of Proposition \ref{Prop expect quadr}}\label{proof of prop 3.4}
\begin{proof}
Let us prove that the set 
\begin{equation*}
    \begin{aligned}
     \cA_\varepsilon(X) = \Lset \alpha \in [0,1)\Mset  &\frac{1}{2}(q^-_{(1+\alpha)/2}(X) - q^+_{(1-\alpha)/2}(X)) \leq \varepsilon\leq \frac{1}{2}(q^+_{(1+\alpha)/2}(X) - q^-_{(1-\alpha)/2}(X))\Rset
    \end{aligned}
\end{equation*}
is not empty for $0 \leq \varepsilon < (\ess(X)-\esi(X))/2.$ 

First, note that the maximum of the expression $\frac{1}{2}(q^+_{(1+\alpha)/2}(X) - q^+_{(1-\alpha)/2}(X))$ with respect to $\alpha$ occurs when $\alpha = 1$ and it is less or equal to $(\esi(X)-\ess(X))/2$ by Definition \ref{quantile}. The minimum with respect to $\alpha$ for both the left-hand and right-hand sides of the double inequality in $\cA_\varepsilon(X)$ occurs when $\alpha = 0$ and it equals zero. Thus, if $\varepsilon \notin [0, (\ess(X)-\esi(X))/2)$ the set $\cA_\varepsilon(X)$ is empty.

Second, since $q^+_{(1+\alpha)/2}(X) \geq q^-_{(1+\alpha)/2}(X)$ and $q^+_{(1-\alpha)/2}(X) \geq q^-_{(1-\alpha)/2}(X)$ for each $\alpha,$ the inequality 
$$q^-_{(1+\alpha)/2}(X) - q^+_{(1-\alpha)/2}(X)) \leq q^+_{(1+\alpha)/2}(X) - q^-_{(1-\alpha)/2}(X)
$$
always holds.

Therefore, for $0 \leq \varepsilon < (\ess(X)-\esi(X))/2$ there always exists $\alpha \in [0,1)$ such that the double inequality in $\cA_\varepsilon(X)$ holds, i.e, $\alpha \in \cA_\varepsilon(X).$ 

Now, let us prove the main statement of the proposition. Relying on the Theorem \ref{dual squantile th} consider the following equality
\begin{equation}
    \begin{aligned}
    \label{myprop dev}
        \min_{C}\lset  &\mathbb{E}[|X-C|-\varepsilon]_+ \rset= \min_{C} \max_{\alpha \in [0,1)}\left\{ \llangle X - C\rrangle_\alpha - (1-\alpha)\varepsilon \right\}.
\end{aligned}
\end{equation}
Then by Remark \ref{dual superquantile remark} and  Sion's minimax theorem, equality (\ref{myprop dev}) can be equivalently rewritten as follows (where the minimum is taken w.r.t. the extended real line, i.e., $C \in \bbr \cup \{+\infty, -\infty\}$)
\begin{equation}
    \begin{aligned}
        \min_{C}\lset  &\mathbb{E}[|X-C|-\varepsilon]_+ \rset = \max_{\alpha \in [0,1)}\min_{C} \left\{ \llangle X - C\rrangle_\alpha - (1-\alpha)\varepsilon \right\}.
\end{aligned}
\end{equation}
Furthermore, Proposition \ref{Prop Mafusalov} implies that for each $\alpha \in [0,1)$
\begin{equation}\label{myprop dev 1}
\begin{split}
   \min_{C}&\left\{ \llangle X - C \rrangle_\alpha - (1-\alpha)\varepsilon \right\}\\
   &= \frac{1}{2}\left((1+\alpha)\hq_{(1-\alpha)/2}(X) + (1-\alpha)\hq_{(1+\alpha)/2}(X)\right)
   - (1-\alpha)\varepsilon - \bbe X,  
\end{split}
\end{equation}
where optimal $C^* = \dfrac{1}{2}\left(q_{(1-\alpha)/2}(X) + q_{(1+\alpha)/2}(X)\right).$ By plugging (\ref{myprop dev 1}) in (\ref{myprop dev}), we get 
\begin{equation}\label{myprop dev 2}
\begin{split}
    & \min_{C}\lset  \mathbb{E}[|X-C|-\varepsilon]_+ \rset = \\
   &= \max_{\alpha \in [0,1)}\Lset   \frac{1}{2}\big((1+\alpha)\hq_{(1-\alpha)/2}(X)
   + (1-\alpha)\hq_{(1+\alpha)/2}(X)\big)- (1-\alpha)\varepsilon - \bbe X\Rset \\
   & = \max_{\alpha \in [0,1)}\Lset   \frac{1}{2}\big((1+\alpha)\hq_{(1-\alpha)/2}(X) + (1-\alpha)\hq_{(1+\alpha)/2}(X)\big) + \alpha \varepsilon\Rset - \varepsilon - \bbe X. \\
     \end{split}
\end{equation}
Denote 
\begin{equation}
\begin{aligned}
    \theta_\varepsilon(\alpha)& = \dfrac{1}{2}\big((1+\alpha)\hq_{(1-\alpha)/2}(X) + (1-\alpha)\hq_{(1+\alpha)/2}(X)\big) + \alpha \varepsilon.
    \end{aligned}
    \end{equation}
Then Remark \ref{dual superquantile remark} implies that $\theta_\varepsilon(\alpha)$ is a concave function
of $\alpha$. Thus $\alpha$ belongs to $\argmax\limits_\alpha \theta_\varepsilon(\alpha)$ if and only if 
\begin{equation}\label{extr cond}
    \dfrac{\partial^+ \theta_\varepsilon(\alpha)}{\partial \alpha} \leq 0 \leq \dfrac{\partial^-\theta_\varepsilon(\alpha)}{\partial \alpha}, \quad 0 \leq \varepsilon < (\ess(X)-\esi(X))/2.
\end{equation}
\cite{CVaR2} proved that 
\begin{equation}\label{cvar deriv rock}
    \frac{\partial^{\pm}}{\partial \alpha}\hq_\alpha(X) = \frac{1}{(1-\alpha)^2}\bbe[X-q_\alpha^{\pm}(X)]_+. 
\end{equation}
Hence \eqref{cvar deriv rock} implies 
\begin{equation}\label{cvar derivative}
    \frac{\partial^{\pm}}{\partial \alpha}((1+\alpha)\hq_{(1-\alpha)/2}(X)) =\hq_{(1-\alpha)/2}(X) - \frac{2}{1+\alpha}\bbe[X-q_{(1-\alpha)/2}^{\mp}(X)]_+.
\end{equation}
Theorem \ref{squantile opt th} implies
\begin{equation*}
    \hq_{(1-\alpha)/2}(X) = C + \frac{2}{1+\alpha}\bbe[X-C]_+,
\end{equation*}
for any $C \in q_{(1-\alpha)/2}(X) = [q^-_{(1-\alpha)/2}(X), q^+_{(1-\alpha)/2}(X)].$ Thus \eqref{cvar derivative} can be written as follows
\begin{equation*}
\begin{split}
    \frac{\partial^{\pm}}{\partial \alpha}((1+\alpha)\hq_{(1-\alpha)/2}(X)) &=\hq_{(1-\alpha)/2}(X) - \frac{2}{1+\alpha}\bbe[X-q_{(1-\alpha)/2}^{\mp}(X)]_+\\
    &= q^{\pm}_{(1-\alpha)/2}(X) + \frac{2}{1+\alpha}\bbe[X-q_{(1-\alpha)/2}^{\mp}(X)]_+ - \frac{2}{1+\alpha}\bbe[X-q_{(1-\alpha)/2}^{\mp}(X)]_+\\
    &= q^{\mp}_{(1-\alpha)/2}(X). 
\end{split}   
\end{equation*}
Similarly, 
\begin{equation*}
    \frac{\partial^{\pm}}{\partial \alpha}((1-\alpha)\hq_{(1+\alpha)/2}(X)) = -\hq_{(1+\alpha)/2}(X) + \frac{2}{1-\alpha}\bbe[X-q_{(1+\alpha)/2}^{\pm}(X)]_+= -q_{(1+\alpha)/2}^{\pm}(X).
\end{equation*}
Therefore, \eqref{extr cond} is equivalent to 
\begin{equation*}
\begin{aligned}
 &\frac{1}{2}(q^-_{(1+\alpha)/2}(X) - q^+_{(1-\alpha)/2}(X)) \leq \varepsilon
 \leq \frac{1}{2}(q^+_{(1+\alpha)/2}(X) - q^-_{(1-\alpha)/2}(X)).
\end{aligned}
\end{equation*}
which can be rewritten as
\begin{equation}
\label{myprop extrema condition}
 \varepsilon \in \frac{1}{2}\bigl (q_{(1+\alpha)/2}(X) - q_{(1-\alpha)/2}(X) \bigr )\;.
\end{equation}
For $\quad 0 \leq \varepsilon < (\ess(X)-\esi(X))/2\;\;$ let
\begin{equation*}
    \begin{aligned}
     \cA_\varepsilon(X) = \Lset \alpha \in [0,1)\Mset  & \varepsilon \in \frac{1}{2}\bigl (q_{(1+\alpha)/2}(X) - q_{(1-\alpha)/2}(X) \bigr )\Rset
    \end{aligned}
\end{equation*}
be a set of points satisfying (\ref{myprop extrema condition}). Then (\ref{myprop dev 2}) implies  
\begin{equation}\label{my prop dev fin}
    \begin{split}
    \cD_\varepsilon(X) &=  \min_{C}\lset \cE_\varepsilon(X-C)\rset\\
    &= \min_{C}\lset  \mathbb{E}[|X-C|-\varepsilon]_+ \rset \\
   &=    \frac{1}{2}\left((1+\alpha)\hq_{(1-\alpha)/2}(X) + (1-\alpha)\hq_{(1+\alpha)/2}(X)\right)-  (1-\alpha)\varepsilon -\bbe X, \quad \alpha \in \cA_\varepsilon(X),  
   \end{split}
\end{equation}
where $$\cS_\varepsilon(X) = \bigcup\limits_{\alpha \in \cA_\varepsilon}\Lset\dfrac{1}{2}\left(q_{(1-\alpha)/2}(X) + q_{(1+\alpha)/2}(X)\right)\Rset$$
is the minimizer for (\ref{my prop dev fin}).

Finally, $\cR(X) = \cD(X) + \bbe X$ and   $\cV(X) = \cE(X) + \bbe X$ imply a complete quadrangle quartet.
\end{proof}
\subsection{Proof of Proposition \ref{dual svr connection}}\label{proof of prop 4.2}
\begin{proof}

Theorem \ref{dual squantile th} implies that
\begin{equation}\label{dual SVR equation}
\min_{\mathbold{w}, b} \bbe\left[|Z(\mathbold{w},b)| - \varepsilon \right]_+ + \dfrac{\lambda}{2}\norm{\mathbold{w}}^2_2= \min_{\mathbold{w}, b}\max_{\alpha \in [0,1)} \Lset\llangle Z(\mathbold{w},b) \rrangle_\alpha - (1-\alpha)\varepsilon + \dfrac{\lambda}{2}\norm{\mathbold{w}}^2_2 \Rset.
\end{equation}
The left-hand side of \eqref{dual SVR equation} is a convex optimization problem. The existence of the optimal solution is guaranteed by the convexity, lower semi-continuity, and coercivity (i.e., for a fixed $\varepsilon, \ \bbe\left[|Z(\mathbold{w},b)| - \varepsilon \right]_+ + \dfrac{\lambda}{2}\norm{\mathbold{w}}^2_2 \to \infty$ as $\|\mathbold{w}\|_2 + b^2 \to \infty$) of the objective. The right-hand side of \eqref{dual SVR equation} is a minimax (convex-concave) optimization problem. Since the existence of an optimal solution holds for the left-hand side of \eqref{dual SVR equation} then the function $\llangle Z(\mathbold{w},b) \rrangle_\alpha - (1-\alpha)\varepsilon + \dfrac{\lambda}{2}\norm{\mathbold{w}}^2_2$  possesses a saddle point $(\alpha^*, (\mathbold{w}^*, b^*))$ on $[0,1) \times \bbr^{n+1}.$

Let $(\mathbold{w}^*,b^*)$ be an optimal solution to \eqref{e-SVR stochastic} for some $\varepsilon \geq 0$. Then \eqref{dual SVR equation} implies that 
$$\begin{aligned}
 &\bbe\left[|Z(\mathbold{w}^*,b^*)| - \varepsilon \right]_+ + \dfrac{\lambda}{2}\norm{\mathbold{w}^*}^2=\llangle Z(\mathbold{w}^*,b^*) \rrangle_{\alpha^*} - (1-\alpha^*)\varepsilon + \dfrac{\lambda}{2}\norm{\mathbold{w}^*}^2,
\end{aligned}
$$
where $(\alpha^*, (\mathbold{w}^*, b^*))$ is a saddle point for the right-hand side of \eqref{dual SVR equation}. Therefore, by the definition of $(\alpha^*, (\mathbold{w}^*, b^*))$
$$(\mathbold{w}^*, b^*) \in \argmin_{\mathbold{w},b} \llangle Z(\mathbold{w},b) \rrangle_{\alpha^*} + \dfrac{\lambda}{2}\norm{\mathbold{w}}^2,
$$
where $\alpha^* \in \left[\bbp(|Z(\mathbold{w}^*,b^*)|<\varepsilon),\bbp(|Z(\mathbold{w}^*,b^*)|\leq \varepsilon)\right)$ by Theorem \ref{dual squantile th}, which completes the proof of \textit{(ii)}.

Now, fix $\alpha^* \in [0,1)$ and consider the optimization problem (\ref{nu-SVR stochastic}). Let $(\mathbold{w}^*,b^*)$ be an optimal solution of (\ref{nu-SVR stochastic}).  Then Theorem \ref{dual squantile th} implies that for each $\varepsilon \in q_{\alpha^*}(|Z(\mathbold{w}^*,b^*)|)$ 
$$\begin{aligned}
 &\bbe\left[|Z(\mathbold{w}^*,b^*)| - \varepsilon \right]_+ + \dfrac{\lambda}{2}\norm{\mathbold{w}^*}^2=\llangle Z(\mathbold{w}^*,b^*) \rrangle_{\alpha^*} - (1-\alpha^*)\varepsilon + \dfrac{\lambda}{2}\norm{\mathbold{w}^*}^2,
\end{aligned}
$$
where $(\alpha^*, (\mathbold{w}^*, b^*))$ is a saddle point for the right-hand side of \eqref{dual SVR equation}. Therefore, 
$$(\mathbold{w}^*, b^*) \in \argmin_{\mathbold{w},b} \bbe\left[|Z(\mathbold{w},b)| - \varepsilon \right]_+ + \dfrac{\lambda}{2}\norm{\mathbold{w}}^2
$$
for all $\varepsilon \in q_{\alpha^*}(|Z(\mathbold{w}^*,b^*)|),$ which completes the proof of \textit{(i)}. 
\end{proof}

\subsection{Proof of Proposition \ref{prop dual svr}}\label{proof of prop 4.7}
Before directly going to the derivation of the dual formulation, let us introduce a couple of definitions and notations.

Let $\mathbb{X}$ be a normed space over $\bbr$ with norm $\norm{\cdot}$ (i.e., $\norm{x} \in \bbr$ for $x \in \mathbb{X}$). Then, the dual space denoted by $\mathbb{X}^*$ is defined as the set of all continuous linear functionals from $\mathbb{X}$ into $\bbr$.
\begin{Def}[Dual norm]
 For $f \in \mathbb{X}^*$, the \emph{dual norm}, denoted by $\norm{\cdot}_*$ of $f$ is defined by 
\begin{equation}
\begin{aligned}
 \norm{f}_* &= \sup \lset |f(x)|: x \in \mathbb{X}, \ \norm{x} \leq 1\rset\\
    &= \sup \left \{ \frac{|f(x)|}{\norm{x}}: x \in \mathbb{X},  \ x \neq 0  \right \}.
\end{aligned}
\end{equation}
\end{Def}
\begin{Def}[Conjugate function, \cite{Boyd}]
Let $f:\mathbb{X} \to \bbr \cup \{\infty\}$. Then a function on $\mathbb{X}^*$, defined by the following equality
\begin{equation}
    f^*(y) = \sup_{x \in \operatorname{dom} f} \{ \langle y , x\rangle - f(x) \},
\end{equation}
is called a \emph{conjugate} of $f$ or the \emph{Legendre--Young--Fenchel transform}.
\end{Def}
\begin{proof}
Define the Lagrangian
\begin{equation*}
    \begin{aligned}
   \cL(\mathbold{w},b,\mathbold{z},\mathbold{\mu})  & = C (1- \alpha) \llangle \mathbold{z} \rrangle^S_\alpha+ \frac{1}{2}\norm{\mathbold{w}}^2_2 + \mathbold{\mu}^\top( \mathbold{y} - \hat{\textbf{X}}\mathbold{w} - \textbf{1}_lb - \mathbold{z})\\
   & = - \left(\mathbold{\mu}^\top\mathbold{z} -  C (1- \alpha) \llangle \mathbold{z} \rrangle^S_\alpha\right)- \left((\hat{\textbf{X}}^\top\mathbold{\mu})^\top\mathbold{w} - \frac{1}{2}\norm{\mathbold{w}}^2_2\right)+ \mathbold{\mu}^\top\mathbold{y} - \mathbold{\mu}^\top \textbf{1}_lb,
\end{aligned}
\end{equation*}
where $\mathbold{\mu} = (\mu^1, \ldots, \mu^l)^\top$ is a Lagrange multiplier vector. 
Then 
\begin{equation*}
\begin{aligned}
       \min_{\mathbold{w},b,\mathbold{z}} \quad& \cL(\mathbold{w},b,\mathbold{z}, \mathbold{\mu})\\
       &= \min_{b} \quad  - C(1-\alpha) \left(\llangle \mathbold{\mu}/C(1-\alpha) \rrangle^{S}_\alpha \right)^*- \left (\frac{1}{2} \norm{\hat{\textbf{X}}^\top\mathbold{\mu}}^{2}_2\right)^* + \mathbold{\mu}^\top\mathbold{y}- \mathbold{\mu}^\top \textbf{1}_lb,
\end{aligned}
\end{equation*}
 which leads to the implicit constraint $\mathbold{\mu}^\top \textbf{1}_l = 0$. Note that in general, cf. \citep{Boyd}, for $\mathbold{x} \in \bbr^n$
\begin{equation*}
    \norm{\mathbold{x}}^* = \begin{cases}
  0 \quad & \norm{\mathbold{x}}_* \leq 1\\
 \infty \quad & \text{otherwise,}
\end{cases}
\end{equation*}
and
\begin{equation*}
    \left(\frac{1}{2}\norm{\mathbold{x}}^2\right)^* = \frac{1}{2}\norm{\mathbold{x}}^2_*.
\end{equation*}
Hence
\begin{equation*}
\begin{aligned}
 - C(1-\alpha)& \left(\llangle \mathbold{\mu}/C(1-\alpha) \rrangle^{S}_\alpha \right)^*=  
  \begin{cases}
  0 \quad & \llangle \mathbold{\mu} \rrangle^S_{\alpha *} \leq C(1-\alpha)\\
 -\infty \quad & \text{otherwise,}
\end{cases}
\end{aligned}
\end{equation*}
and 
\begin{equation*}
    \left (\frac{1}{2} \norm{\hat{\textbf{X}}^\top\mathbold{\mu}}^{2}_2\right)^* = \frac{1}{2}\norm{\hat{\textbf{X}}^\top\mathbold{\mu}}^{2}_{2} =\mathbold{\mu}^\top\hat{\textbf{X}}\hat{\textbf{X}}^\top\mathbold{\mu} .
\end{equation*}
Finally, noting that (cf. \citep{bertsimas2011robust, CVaRNorm2}), 
\begin{equation*}
    \llangle \mathbold{\mu} \rrangle^S_{\alpha *} = \max \lset \norm{\mathbold{\mu}}_1, l(1-\alpha)\norm{\mathbold{\mu}}_\infty\rset
\end{equation*}
and proceeding with the maximization of the Lagrangian w.r.t. dual variables $\mathbold{\mu}$ completes the proof.
\end{proof}
\end{document}